\documentclass{article}

\PassOptionsToPackage{round}{natbib}

\usepackage[final]{neurips_2025}




\usepackage[utf8]{inputenc} 
\usepackage[T1]{fontenc}    
\usepackage[breaklinks=true,
colorlinks=true, 
linkcolor=BrickRed, 
urlcolor=DarkGreen, 
citecolor=C2, 
anchorcolor=DarkGreen]{hyperref}       

\usepackage{booktabs}       
\usepackage{amsfonts}       
\usepackage{bbm}
\usepackage{nicefrac}       
\usepackage{microtype}      
\usepackage[dvipsnames,table]{xcolor}         
\usepackage[most]{tcolorbox}
\usepackage{algorithm2e}
\usepackage{amssymb, amsthm}
\RestyleAlgo{ruled}
\usepackage{caption}
\usepackage{multirow}
\captionsetup[table]{skip=9pt}
\usepackage{minitoc}
\usepackage{nomencl}
\usepackage{wrapfig}
\usepackage[shortlabels]{enumitem}
\usepackage{tikz}
\usetikzlibrary{positioning,calc,arrows.meta,shapes.geometric,fit}
\usepackage{cleveref}
\usepackage{notation}
\usepackage{thmtools}
\usepackage{thm-restate}
\usepackage{subcaption}
\usepackage{authblk}

\definecolor{Green}{HTML}{17891a}
\definecolor{DarkGreen}{HTML}{054802}
\definecolor{C0}{HTML}{1f77b4}
\definecolor{C1}{HTML}{ff7f0e}
\definecolor{C2}{HTML}{55a868}
\definecolor{C3}{HTML}{c44e52}

\newtcolorbox{empheqboxed}{colback=Gray!20, 
 colframe=white,
 width=\textwidth,
 sharpish corners,
 top=1mm, 
 bottom=0pt,
 left=2pt,
 right=2pt
}


\title{The third pillar of causal analysis? A measurement perspective on causal representations}

%


\makeatletter \renewcommand\AB@affilsepx{\, \,   \protect\Affilfont} 
\renewcommand\AB@affilsepx{\\[0.3ex] \protect\Affilfont}
\makeatother

\setlength{\affilsep}{0.6em}              
\renewcommand\Affilfont{\small\normalfont}   

\makeatother

\author[1]{\textbf{Dingling Yao}$^*$}
\author[1]{\textbf{Shimeng Huang}$^*$}
\author[1]{\textbf{Riccardo Cadei}}
\author[2, 3]{\textbf{Kun Zhang}}
\author[1]{\textbf{Francesco Locatello}}
\affil[1]{Institute of Science and Technology Austria}
\affil[2]{Carnegie Mellon University}
\affil[3]{Mohamed bin Zayed University of Artificial Intelligence (MBZUAI)}
\affil[*]{\textit{Equal contribution.}} 

\begin{document}
\doparttoc 
\faketableofcontents 
\maketitle

\begin{abstract}

\looseness=-1 Causal reasoning and discovery, two fundamental tasks of causal analysis, often face challenges in applications due to the complexity, noisiness, and high-dimensionality of real-world data.
Despite recent progress in identifying latent causal structures using causal representation learning (CRL), what makes learned representations useful for causal downstream tasks and how to evaluate them are still not well understood.
In this paper, we reinterpret CRL using a measurement model framework, where the learned representations are viewed as proxy measurements of the latent causal variables. Our approach clarifies the conditions under which learned representations support downstream causal reasoning and provides a principled basis for quantitatively assessing the quality of representations using a new Test-based Measurement EXclusivity (T-MEX) score. 
We validate T-MEX across diverse causal inference scenarios, including numerical simulations and real-world ecological video analysis, demonstrating that the proposed framework and corresponding score effectively assess the identification of learned representations and their usefulness for causal downstream tasks. Reproducible code can be found at \href{https://github.com/shimenghuang/a-measurement-perspective-of-crl}{https://github.com/shimenghuang/a-measurement-perspective-of-crl}.
\end{abstract}

\section{Introduction} \label{sec:intro}
\looseness=-1{Causal analysis rests on two foundational pillars: causal reasoning and causal discovery. 
Causal reasoning operates under the assumption that the causal structure is known or can be assumed, and leverages data to make quantitative causal statements, for example, about the average effect of one variable on another. As causal structures are often unknown, causal discovery aims to uncover this structure, assuming that the causal variables of interest are readily observed. 
In many real-world settings, however, the causal variables may not be directly observable. While originally formulated mostly to enable causal capabilities in machine learning models, Causal Representation Learning \citep[CRL,][]{scholkopf2021toward} has the potential to serve as a third pillar of causal analysis: enabling applications of causality involving unstructured data. For this, we reinterpret causal representation learning using the formalism of ``\textit{measurement models}''~\citep{silva2006learning}, 
wherein the learned representations serve as proxy measurements for latent causal variables. This perspective 
of CRL allows us to better characterize when a representation supports downstream causal reasoning, and it also provides a principled basis for quantitatively evaluating the quality of identification.

\looseness=-1 Methodologically, CRL tackles a more challenging task compared to independent component analysis (ICA) and disentanglement, where the latent variables are assumed to be independent of each other~\citep{hyvarinen1999nonlinear,hyvarinen2019nonlinear,higgins2017beta,locatello2019challenging}. Instead, CRL aims to unmix a set of causally related latent variables. 
Many recent causal representation learning works have provided different theoretical results for causal variable identification compiling various problem settings~\citep{von2021self,von2024nonparametric,zhang2024causal,ahuja2024multi,ahuja2022weakly,varici2024general,zhang2024identifiability,yao2023multi,kong2022partial,lippe2022citris,xie2024generalized,dong2023versatile,lachapelle2022disentanglement,lachapelle2022synergies,yao2022temporally,zhang2024identifiability,squires2023linear,buchholz2023learning,kong2023identification}, recently unified by~\citep{yao2024unifying} into a single general methodology. Although most of the results have been theoretical in nature, machine learning models explicitly empowered with identified causal structure have been shown to be more robust under distributional shifts and provide better out-of-distribution generalization~\citep{fumero2024leveraging, ahuja2021invariance,bareinboim2016causal,zhang2020domain,rojas2018invariant}. 
From an AI for science perspective, CRL has shown its potential in understanding climate physics from raw measurement data~\citep{yao2024marrying}, answering causal questions in the scope of ecology experiments~\citep{cadei2024smoke,cadei2025causal,yao2024unifying}, psychometric studies \citep{dong2023versatile}, and countless more applications related to biomedicine~\citep{zhang2024identifiability,sun2025causal,ravuri2025weakly,jain2024automated}.

\looseness=-1Despite recent progress in identifying latent causal structures within causal representation learning, it remains unclear what makes learned representations useful for downstream causal tasks and how to best evaluate them. Building on the proposed measurement model framework, we introduce a new evaluation metric, the Test-based Measurement EXclusivity (T-MEX) Score, which effectively quantifies how well the learned representation aligns with the underlying measurement model. This underlying measurement model can be specified by, for instance, identifiability theory of a CRL algorithm~\pcref{fig:measurement_models}, assumptions for a particular causal reasoning task~\pcref{fig:simu_measurement_model,fig:istan_measurement_model}, or ground truth knowledge.
In contrast to commonly used CRL evaluation metrics, which suffer from clear limitations~\pcref{sec:problems_of_current_eval}, we demonstrate that T-MEX reliably assesses both the identifiability~\pcref{def:identif} and causal validity~\pcref{def:causally_valid_model} of learned representations, as shown in a wide range of causal reasoning tasks across numerical simulations and real-world ecological video analysis~\pcref{sec:exp}. 
We summarize the main contributions of this paper as follows:
\begin{wrapfigure}{r}{0.46\textwidth}
\vspace{-15pt}
  \begin{center}
  \resizebox{\linewidth}{!}{\begin{tikzpicture}
    \tikzstyle{var}=[circle, draw, thick, minimum size=8.5mm, font=\small, inner sep=1]
    \tikzstyle{arrow}=[-latex, thick]
    \tikzstyle{doublearrow}=[latex-latex, thick]
    \tikzstyle{dashedarrow}=[-latex, thick, dashed]

    \node[var] (Z11) at (-3, 1.5) {$\Zb_1$};
    \node[var] (Z12) at (-1.5, 1.5) {$\Zb_2$};
    \node[var] (Z13) at (0, 1.5) {$\Zb_3$};
    \node[var, fill=gray!30] (X) at (-1.5, 0) {$\Xb$};
    \draw[arrow] (Z11) -- (Z12); 
    \draw[arrow] (Z12) -- (Z13);
    \draw[-latex, thick] (Z11) to [out=45,in=145] (Z13);
    \draw[arrow] (Z11) -- (X); 
    \draw[arrow] (Z12) -- (X);
    \draw[arrow] (Z13) -- (X);

    \node[var] (Z21) at (4.5, 4) {$\Zb_1$};
    \node[var] (Z22) at (6, 4) {$\Zb_2$};
    \node[var] (Z23) at (7.5, 4) {$\Zb_3$};
    \node[var, fill=gray!30] (Zh2A1) at (4.5, 2.5) {$\widehat{\Zb}_{A_1}$};
    \node[var, fill=gray!30] (Zh2A2) at (6, 2.5) {$\widehat{\Zb}_{A_2}$};
    \node[var, fill=gray!30] (Zh2A3) at (7.5, 2.5) {$\widehat{\Zb}_{A_3}$};
    \draw[arrow] (Z21) -- (Z22); 
    \draw[arrow] (Z22) -- (Z23);
    \draw[-latex, thick] (Z21) to [out=45,in=145] (Z23);
    \draw[arrow] (Z21) -- (Zh2A1); 
    \draw[arrow] (Z22) -- (Zh2A2); 
    \draw[arrow] (Z23) -- (Zh2A3); 

    \node[var] (Z31) at (4.5, 0.5) {$\Zb_1$};
    \node[var] (Z32) at (6, 0.5) {$\Zb_2$};
    \node[var] (Z33) at (7.5, 0.5) {$\Zb_3$};
    \node[var, fill=gray!30] (Zh3A1) at (4.5, -1) {$\widehat{\Zb}_{A_1}$};
    \node[var, fill=gray!30] (Zh3A2) at (6.75, -1) {$\widehat{\Zb}_{A_2}$};
    \draw[arrow] (Z31) -- (Z32); 
    \draw[arrow] (Z32) -- (Z33);
    \draw[-latex, thick] (Z31) to [out=45,in=145] (Z33);
    \draw[arrow] (Z31) -- (Zh3A1); 
    \draw[arrow] (Z32) -- (Zh3A2); 
    \draw[arrow] (Z33) -- (Zh3A2); 

    \coordinate (A) at (0.8, 1.5);
    \coordinate (B) at (3.8, 3.5);
    \coordinate (C) at (3.8, -0.5);

    \draw[-latex, ultra thick] (A) -- (B) node[midway, sloped, above] {Algorithm 1};
    \draw[-latex, ultra thick] (A) -- (C) node[midway, sloped, above] {Algorithm 2};

    \node[text width=1cm] at (6.3, 1.7) {(a)};
    \node[text width=1cm] at (6.3, -1.6) {(b)};

\end{tikzpicture}} 
  \end{center}
  \caption{(\emph{Left}) A measurement model where $\Xb$ is a fully mixed measurement of the causal variables. 
    $\Xb$ is often termed the \emph{observables} in CRL literature, representing the observed data. 
    (\emph{Right}) Two measurement models specified by different CRL identification algorithms: (a)
    Algorithm 1 guarantees one-to-one correspondence between the learned representation and causal variables; (b) Algorithm 2 guarantees that $\widehat\Zb_{A_1}$ corresponds to $\Zb_1$ while $\widehat\Zb_{A_2}$ represents a mixing of $\Zb_2$ and $\Zb_3$. 
    }
    \label{fig:measurement_models}
\vspace{-35pt}
\end{wrapfigure}
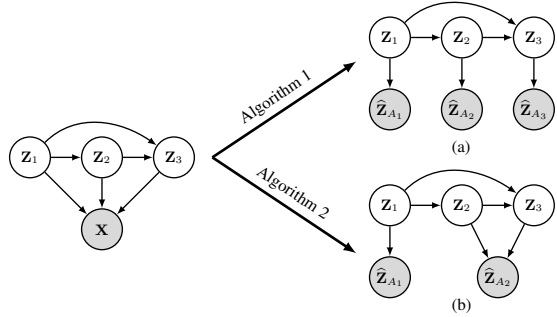
\begin{itemize}[leftmargin=*]
    \item \looseness=-1 We reinterpret CRL using a \emph{measurement model} framework, wherein the learned representations serve as proxy measurements for latent causal variables~\pcref{sec:measurement_model}. This formalism provides a clearer characterization of both the identification quality of learned representation and its usefulness for causal downstream tasks.
    \item \looseness=-1 We propose a new evaluation metric (T-MEX) that quantifies the alignment of the representations and the underlying measurement model~\pcref{sec:evaluation_metric}, and we demonstrate its advantages over widely used CRL evaluation metrics that suffer from notable limitations~\pcref{sec:problems_of_current_eval}.
    \item \looseness=-1 Supported by theoretical analysis, our empirical evaluations confirm that T-MEX maintains validity and effectiveness across diverse causal reasoning scenarios, including treatment effect estimation and covariate adjustment in both numerical simulations and real-world ecological experiments~\pcref{sec:exp}.
\end{itemize}

\section{CRL from A Measurement Model Perspective}
\label{sec:measurement_model}

\textbf{Notation.}
\looseness=-1 Throughout, we write $[N]$ as shorthand for the set $\{1,\dots,N\}$.  Random vectors are denoted by bold uppercase letters (e.g.\ $\Zb$) and their realizations by bold lowercase (e.g.,\ $\zb$), indexed by superscripts.  For instance, $n$ samples of $\Zb$ are written as $\{\zb^k\}_{k\in[N]}$.
A vector $\Zb$ can be sliced either by a single index $i \in [\dim(\Zb)]$ via $\Zb_i$ or a index subset $A \subseteq [\dim(\Zb)]$ with $\Zb_A := \{\Zb_i: i \in A\}$. 
$P_{\Zb}$ denotes the probability distribution of the random vector $\Zb$ and $p_{\Zb}(\zb)$ denotes the associated probability density function (We omit the subscription and write $p(\zb)$ when the context is clear). By default, a ``measurable'' function is \emph{measurable} w.r.t.\ the Borel sigma algebras and is defined w.r.t.\ the Lebesgue measure. A more comprehensive summary of notations is provided in~\cref{app:notations}. 

\subsection{The Measurement Model Framework}
We formulate causal representation learning using a measurement model framework inspired by the formalism of~\citep{silva2006learning}. 

\begin{definition}[Measurement model]\label{def:measurement_model}
Let $\Vb = (\Zb, \widehat{\Zb})$ be a collection of variables that can be partitioned into two sets: 
a set of latent \emph{causal variables} $\Zb = \{\Zb_1,\cdots, \Zb_N\}$ with $\Zb_i$ taking values in 
$\bbR$ for all $i\in[N]$, and a set of observed \emph{measurement variables} 
$\widehat{\Zb} = \{\widehat{\Zb}_{A_1},\cdots, \widehat{\Zb}_{A_M}\}$ where for all 
$j\in[M]$, $\widehat{\Zb}_{A_j}$ takes values in $\RR^{D_j}$ with $D_j \in\bbN_+$, 
and it holds that $\widehat\Zb\cap\Zb = \emptyset$.  

A \emph{measurement model} $\Mcal = \langle \Zb, \widehat{\Zb}, \{h_j\}_{j=1}^{M}\rangle$ 
specifies that $\widehat{\Zb}$ follows a deterministic structural causal model 
\begin{equation*}
\left\{\widehat{\Zb}_{A_j} \coloneqq h_j(\paHatZj)\right\}_{j=1}^{M}, 
\end{equation*}
where $\text{pa}(\widehat\Zb_{A_j})\subseteq [N]$ for all $j\in[M]$, and 
$\Zb_{\text{pa}(\widehat{\Zb}_{A_j})} \subseteq\Zb$ are called the causal parents of $\widehat{\Zb}_{A_j}$. 
The functions $h_j$ for all $j\in [M]$ are called the \textit{measurement functions}. 
If for some $j\in [M]$, $|\text{pa}(\widehat{\Zb}_{A_j})| = 1$ and the function $h_j$ is the identity map, 
then the causal variable $\text{pa}(\widehat{\Zb}_{A_j})$ is said to be \emph{measured directly}. 
\end{definition}

\begin{remark}[Difference from \citep{silva2006learning}]
While we borrow the concept of a measurement model from \citet{silva2006learning}, our framework differs in two key aspects. First, \citet{silva2006learning} aims to uncover relationships among latent causal variables by searching for pure measurements, i.e., a tree-structure in which latent nodes have fixed, noisy, low-dimensional observed children (measurements). 
In contrast, we interpret a given causal representation produced by a CRL algorithm as measurement variables and focus on evaluating their usefulness for specific causal tasks, which requires specification of a causal model. Second, \citet{silva2006learning} assumes a linear latent structural causal model, whereas our framework imposes no parametric structural assumption on the latent causal variables. Rather, we specify the relationship between the causal variables and their measurements according to certain hypotheses, such as identification guarantees, prior knowledge, or assumptions for specific causal downstream tasks. 
As we will see in \cref{sec:evaluation_metric}, this also allows us to properly evaluate a learned CRL model. 
\end{remark}

\begin{remark}
\label{rem:noisy_measurements}
    While we treat the measurement variables $\widehat{\Zb}$ as noise-free nonlinear mixing of their causal parents, we can easily extend our framework to noisy measurements by considering the noise variables as additional latent causal variables.
\end{remark}

\begin{example}
\label{ex:ident_measurement_model}
Assume by the identifiability theory of a specific CRL method that each $\widehat{\Zb}_{A_j}$ block-identifies~(see~\cref{def:identif} \citep[Defn~4.1]{von2021self}) a subset of latent variables $\Zb_{S_i}$ ($S_i \subseteq [N]$). Then for the measurement model $\Mcal = \langle \Zb, \widehat{\Zb}, \{h_j\}_{j=1}^{M} \rangle$ it holds that: $\widehat{\Zb}_{A_j} \coloneqq h_j(\Zb_{S_i}),$
with $h_j: \RR^{|S_i|} \to \RR^{D_j}$ a diffeomporphism for all $j \in [M]$. 

The measurement model induces a partial directed acyclic graph (DAG), that is, for any latent variable $q$ that is block-identified~\pcref{def:identif} by $A_j$, there is an edge from the latent causal variable 
$\Zb_q$ to the measurement variable $\widehat\Zb_{A_j}$, and the measurement function $h_j$ is a diffeomorphism. Illustrative examples are shown in~\cref{fig:measurement_models} for different identifiability guarantees.
\end{example}

\textbf{Discussion.} Note that a measurement model specified by certain identifiability theory (see ~\cref{fig:measurement_models}) is a necessary but not sufficient condition for drop-in replacement of a variable with its identified counterpart in a causal inference engine~\citep{pearl2018book} 
or a downstream causal estimand like \emph{average treatment effect}~\citep{robins1994estimation}. 
To this end, we introduce \emph{causally valid measurement model}.

\begin{definition}[Causally valid measurement model]
\label{def:causally_valid_model}
The measurement model~\pcref{def:measurement_model} is \textit{``causally valid" } with respect to a statistical estimand $g$ that identifies a target causal estimand, if the measurement $\widehat{\Zb}$ is a drop-in replacement in $g$ for the true causal variables $\Zb$, i.e., 
$g(\Zb) = g(\widehat{\Zb})$.
\end{definition}

\looseness=-1 \textbf{Discussion.}
Causal validity of a measurement model with respect to a specific estimand boils down to the estimand being invariant with respect to the measurement function.
As \citep{von2024nonparametric} already pointed out, identification of a latent causal variable up to a non-linear parameterization (i.e., block-identifiability~\pcref{def:identif}) does not allow average treatment effect estimation if either the treatment or outcome is a latent causal variable without additional information.
For that, a direct measurement (see~\cref{def:measurement_model}) as in~\citep{cadei2024smoke,cadei2025causal} is necessary; alternatively, one can choose an estimand that is invariant to non-linear invertible parameterizations, e.g., (conditional) mutual information~\citep{janzing2013quantifying}. 
As another example, a non-linear invertible parameterization is enough to model confounding variables~\citep{yao2024marrying} and instruments, see~\ref{app:causal_implication} for extended discussions and examples. 
Finally, note that the causal validity of the measurement models does not always require one-to-one correspondence between the measurement variables and latent causal variables: When an estimand concerns a coarse-graining of a subset of variables, then a measurement model mixing the right subset of variables can still be causally valid. For example, the valid adjustment set $\Wb$ in~\cref{fig:supervised_TY} can contain two or more variables, which can remain entangled with each other in the learned representation $\widehat{\Wb} := h(\Wb)$ as long as the measurement function $h$ is invertible, see~\cref{app:causal_implication} for detailed derivations.

\looseness=-1\textbf{When is a measurement model ``true''?} Note that any causal model between learned representation can always be trivially formulated as a measurement model, with each identified representation variable corresponding to a latent causal variable (i.e., $\widehat{\Zb}_1 \rightarrow \widehat{\Zb}_2$ implicitly implies a measurement model $\widehat{\Zb}_1\leftarrow \Zb_1 \rightarrow \Zb_2 \rightarrow \widehat{\Zb}_2$). 
Sometimes, by means of other assumptions, the latent causal model may not match one-to-one with the measurements; for example, see~\Cref{fig:measurement_models} (b). Our discussion on the measurement model only specifies the dependency between a learned representation and an (implicitly) assumed latent causal model. Following~\citep{peters2014causal}, we intend the latent causal model to be true if it agrees with the results of randomized studies in practice. If the latent causal model is true, then a causally valid measurement model is trivially also true.

\section{Evaluating Causal Representations using Measurement Models}
\label{sec:evaluation_metric}
\looseness=-1 This section explains how the measurement model formalism we introduced in~\cref{sec:measurement_model} 
serves as a natural tool to evaluate causal representations. 
A causal representation is defined as a set of measurement variables output from an encoder --- 
a parameterized function that maps the observables $\Xb$ to the measurement variables $\widehat{\Zb}$. 
Each CRL method specifies a measurement model, either through its identifiability guarantees or 
the particular causal task it addresses. This measurement model defines which causal variables a 
representation should \emph{exclusively measure}.
Given paired samples of the true causal variables $\Zb$ and their corresponding measurement variables $\widehat{\Zb}$ from a trained CRL model, evaluation boils down to comparing the measurement model against the observed joint distribution $P_{\Zb, \widehat{\Zb}}$.  
Before presenting our proposed evaluation metric, we introduce the following additional notation.
\begin{notation2}
Let $\Zb_1$, $\Zb_2$, and $\Zb_3$ be three absolutely continuous random variables taking values in 
$\bbR^{d_{Z_1}}$, $\bbR^{d_{Z_2}}$, and $\bbR^{d_{Z_3}}$ respectively. 
We say that $\Zb_1$ and $\Zb_2$ are \emph{conditionally independent} 
given $\Zb_3$ if $p(\Zb_1, \Zb_2\given \Zb_3) = p(\Zb_1\given \Zb_3) p(\Zb_2 \given \Zb_3)$, and it is denoted as $\Zb_1\indep\Zb_2\given \Zb_3$. 
A statistical test $\varphi$ is a function that maps data to $\{0,1\}$, e.g., 
$\varphi: \bbR^{n\times d_{Z_1}} \times \bbR^{n\times d_{Z_2}}\times \bbR^{n\times d_{Z_3}} \to\{0,1\}$, where $n$ denotes the number of samples.
The test $\varphi$ rejects a null hypothesis $\cH_0$ if $\varphi(\Zb_1,\Zb_2,\Zb_3) = 1$ and does not reject it if 
$\varphi(\Zb_1,\Zb_2,\Zb_3) = 0$.
Given a significance level $\alpha\in(0,1)$, a test is said to be \emph{valid} if it holds that 
$\sup_{P\in\cH_0} \bbP(\varphi(\Zb_1,\Zb_2,\Zb_3) = 1) \leq \alpha$, and 
it is said to have power $\beta\in(0,1)$ against an alternative distribution $P\not\in\cH_0$ if 
$\bbP(\varphi(\Zb_1,\Zb_2,\Zb_3) = 1) = \beta$. 
\end{notation2}




\looseness=-1 \textbf{Exclusivity of measurements.} 
A measurement model describes the relationship between the causal and the measurement 
variables. Specifically, it tell us for each measurement variable, which causal variables it 
should \emph{exclusively measure}. We formally define this concept below.

    
\begin{definition}[Exclusivity of a measurement variable]
\label{defn:exclusivity}
    Let $\Mcal = \langle\Zb, \widehat{\Zb}, \{h_j\}_{j \in [M]} \rangle$ be a measurement model, if a measurement variable $\widehat{\Zb}_{A_j}, j\in [M]$ only has one causal parent $\Zb_i$ for some $i \in [N]$, then we say $\widehat{\Zb}_{A_j}$ \emph{exclusively measures} $\Zb_i$. 
\end{definition}

Given samples of the causal and measurement variables 
$\{(\zb^k,\hat\zb^k)\}_{k\in[n]}$, 
we can check whether the measurement variables do satisfy the exclusivity property in the 
data by testing the following null hypotheses: 
\begin{equation} \label{eq:null_hypo}
    \cH_0(i, j): \widehat\Zb_{A_j}\indep \Zb_i\given \Zb_{[N]\setminus\{i\}},
\end{equation}
for all $i\in [N]$ and $j\in [M]$. For a numerical summary of the overall exclusivity 
of the measurement variables, we propose the following 
\emph{Test-based Measurement EXclusivity (T-MEX)} score. 



\begin{definition}[Test-based measurement exclusivity score] \label{def:tmex_score}
Let $V\in \{0,1\}^{N\times M}$ be the adjacency matrix corresponding to the conditional 
independencies according to a measurement model $\Mcal$, such that for all $j\in [M]$ and $i\in [N]$, 
$V_{ji} = 1$ if a causal variable $\Zb_i$ is a causal parent of a measurement variable $\widehat\Zb_{A_j}$ 
according to the measurement model, and $V_{ji} = 0$ otherwise. 
Let $\widehat{W}\in \{0,1\}^{N\times M}$ be the matrix constructed according to the 
test results of the conditional independencies in~\cref{eq:null_hypo} based on the samples 
of $(\Zb,\widehat\Zb)$, 
such that for all $j\in [M]$ and $i\in [N]$, $W_{ji} = 1$ if $\cH_0(i,j)$ is rejected, 
and $W_{ji} = 0$ otherwise. Then the test-based measurement exclusivity (T-MEX) score is defined 
as the \emph{hamming distance} 
between $V$ and $\widehat{W}$:
\begin{equation*}
\text{T-MEX}(V,\widehat{W}) \coloneqq \sum_{j=1}^M\sum_{i=1}^N \mathbbm{1}(V_{ji} \neq \widehat{W}_{ji}),
\end{equation*}
where $\mathbbm{1}$ denotes the indicator function.
\end{definition}

Details for computing T-MEX is given in~\cref{alg:comp_trex}.
As T-MEX score is based on conditional independence testing, 
its value depends on the randomness in the samples, and the properties of the statistical tests being used. 
In ~\cref{prop:trex_expected_score}, we show the upper bound of the expected T-MEX score when the joint distribution $P_{\Zb, \widehat{\Zb}}$
of the causal variables $\Zb$ and output measurement variables $\widehat{\Zb}$ from a CRL model does align with a 
measurement model. 


\begin{restatable}{proposition}{tmexExpectation}
\label{prop:trex_expected_score}
\looseness=-1 Let $\{\varphi_{ij}\}_{i\in[N], j\in[M]}$ be a family of tests for \cref{eq:null_hypo} where for all
$i\in[N]$ and $j\in [M]$, 
$\varphi_{ij}$ is valid with level $\alpha\in(0,1)$ and has power at least $\beta\in(0,1)$. 
Given an adjacency matrix $V\in\bbR^{N\times M}$ based on a measurement model, 
if the joint distribution $P_{\Zb, \widehat{\Zb}}$ of the causal and measurement variables does 
align with the measurement model, and each entry in $\widehat{W}$ is computed based on an 
independent set of samples $\{(\zb^k,\hat\zb^k)\}_{k\in[n_{ij}]}, n_{ij} \in \bbN_{+}$, then the expected T-MEX satisfies 
\begin{equation*}
   \bbE[\text{T-MEX}(V, \widehat{W})] \leq \alpha\cdot (MN - ||V||_1) + (1-\beta) \cdot ||V||_1,
\end{equation*}
where $||V||_1 = \sum_{i=1}^N\sum_{j=1}^M V_{ij}$ is the $L_1$-norm of $V$.
\end{restatable}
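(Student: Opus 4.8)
The plan is to bound the expected Hamming distance entrywise and then sum, exploiting linearity of expectation together with the validity and power guarantees of the individual tests. First I would write
\[
\bbE[\text{T-MEX}(V,\widehat{W})] = \sum_{j=1}^M\sum_{i=1}^N \bbP\!\left(V_{ji}\neq \widehat{W}_{ji}\right),
\]
since the indicator $\mathbbm{1}(V_{ji}\neq\widehat{W}_{ji})$ is Bernoulli and $V$ is deterministic. Note that this step holds for any (possibly dependent) collection of tests, so the per-entry independence hypothesis is not actually needed for this expectation bound; it would only become relevant if one later sought concentration rather than control in expectation. It then suffices to control each term $\bbP(V_{ji}\neq\widehat{W}_{ji})$ according to whether $V_{ji}=0$ or $V_{ji}=1$.

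The crux is to translate the structural statement encoded in $V_{ji}$ into the truth value, under the joint law $P_{\Zb,\widehat{\Zb}}$, of the null $\cH_0(i,j):\widehat{\Zb}_{A_j}\indep\Zb_i\mid\Zb_{[N]\setminus\{i\}}$. When $V_{ji}=0$, the causal variable $\Zb_i$ is not a parent of $\widehat{\Zb}_{A_j}$; since the measurement model is deterministic, $\widehat{\Zb}_{A_j}=h_j(\Zb_{\text{pa}(\widehat{\Zb}_{A_j})})$ is a measurable function of variables all contained in the conditioning set $\Zb_{[N]\setminus\{i\}}$, hence it is constant given that set and therefore conditionally independent of $\Zb_i$, so $\cH_0(i,j)$ holds. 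Conversely, when $V_{ji}=1$ I would invoke the assumption that $P_{\Zb,\widehat{\Zb}}$ \emph{aligns} with the measurement model to guarantee that $h_j$ genuinely depends on $\Zb_i$, so that the conditional independence fails and $\cH_0(i,j)$ is false. This faithfulness-type ``no exact cancellation'' direction is the one delicate point, and I would state explicitly that ``alignment'' is taken to encode precisely the equivalence $V_{ji}=0\iff \cH_0(i,j)\text{ true}$.

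With the truth value of each null pinned down, the two cases close the argument immediately. If $V_{ji}=0$, a mismatch occurs exactly when the valid level-$\alpha$ test rejects a true null, so $\bbP(V_{ji}\neq\widehat{W}_{ji})=\bbP(\varphi_{ij}=1)\le\alpha$ by validity. If $V_{ji}=1$, a mismatch occurs exactly when the test fails to reject a false null, so $\bbP(V_{ji}\neq\widehat{W}_{ji})=\bbP(\varphi_{ij}=0)\le 1-\beta$ by the power assumption. Finally I would count entries: there are $\|V\|_1$ entries with $V_{ji}=1$ and $MN-\|V\|_1$ entries with $V_{ji}=0$, and summing the per-entry bounds yields $\bbE[\text{T-MEX}]\le \alpha(MN-\|V\|_1)+(1-\beta)\|V\|_1$. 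The main obstacle is conceptual rather than computational: rigorously establishing the equivalence between $V_{ji}$ and the truth of $\cH_0(i,j)$, especially the direction that forces conditional dependence whenever $\Zb_i$ is a genuine parent of $\widehat{\Zb}_{A_j}$.
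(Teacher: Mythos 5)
Your proof is correct and follows essentially the same route as the paper's: decompose the expected Hamming distance entrywise via linearity of expectation, identify the truth value of each null $\cH_0(i,j)$ with the entry $V_{ji}$ under the alignment assumption, and bound each mismatch probability by $\alpha$ (validity) when $V_{ji}=0$ and by $1-\beta$ (power) when $V_{ji}=1$ before counting entries. Your two side observations---that per-entry sample independence is not actually needed for the expectation bound (only for controlling dependence effects beyond the mean, as the paper's subsequent remark discusses), and that the $V_{ji}=0$ direction of alignment follows automatically from determinism of the measurement functions---are correct refinements but do not change the argument.
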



\begin{remark}
\cref{prop:trex_expected_score} assumes that each null hypothesis in~\cref{eq:null_hypo} is 
tested using an independent set of samples. When there is only one set of samples available for a large number of tests, using the same sample set can lead to inflation of the false positive rate, 
and may inflate the T-MEX score. 
In this case, we recommend doing a multiple comparison adjustment when constructing $\widehat{W}$, 
for example, the Bonferroni-Holm correction \citep{holm1979simple}, which controls the family-wise 
error rate while it does not make assumptions on the dependencies of the multiple p-values. 
\end{remark}

\begin{remark}
\looseness=-1In this section, we focus on the exclusivity perspective of a measurement model via an 
approach similar to the idea of falsification of causal graphs~\citep[e.g.,][]{kook2025falsifying, faller2024self}. 
This is a non-parametric approach which is agnostic to the measurement functions. 
In certain cases, however, a measurement model may contain not only the conditional independence 
structure, but also other parametric assumptions through specifications of the 
measurement functions $\{h_j\}_{j\in[M]}$. Then, one may extend T-MEX to also take these constraints into account. 
\end{remark}

\section{Related Work: Flaws of Existing Evaluation Metrics for CRL}\label{sec:problems_of_current_eval}
\looseness=-1In this section, we cover the metrics that have been used by most papers proposing causal representation learning approaches ~\citep{von2021self,von2024nonparametric,zheng2022identifiability,ahuja2024multi,ahuja2022weakly,varici2024general,zhang2024identifiability,zhang2024causal,yao2023multi,lippe2022causal,lippe2022citris,lachapelle2022disentanglement,lachapelle2022synergies,yao2022temporally,zhang2024identifiability,squires2023linear,buchholz2023learning,yao2024unifying} to name a few. We highlight how these metrics are not immediately suitable to evaluate identification results in the presence of causal relations, making it difficult to compare models and requiring great care in the interpretation of the results that is often missed~\citep{gamella2025sanity}. 

\looseness=-1 Standard evaluation for latent variable identification in existing CRL works employs \textit{coefficient of determination} $R^2$~\pcref{def:r2}, and \textit{mean correlation coefficient}~\pcref{def:mcc}. However, when the latent variables are causally related, a high score of these two metrics 
does not indicate that the learned representations align with the measurement model we expect from the identifiability theory.
\cref{ex:example0} illustrates this limitation of these two metrics under the presence of causal dependencies.


\begin{example} \label{ex:example0}
Assume that the latent causal variables ${\Zb}$ in~\cref{fig:measurement_models} (b) follow a linear Gaussian additive noise model.
Specifically, the latent variables $\Zb_1$ and $\Zb_2$ are generated based on the following structural 
equation: 
\begin{equation} \label{eq:struc_eq_cs}
    \Zb_2 \coloneqq a \cdot \Zb_1 + e
\end{equation}
with $e\sim P_e$, $\bbE[e] = 0$ and $e\indep \Zb_1$. 
Suppose that the measurement model which induces \cref{fig:measurement_models} (b) 
specifies that the measurement function $h: \RR \to \RR$ is a diffeomorphism such that 
$\widehat{\Zb}_{A_1} = h(\Zb_1)$, that is, 
$\widehat{\Zb}_{A_1}$ identifies $\Zb_1$, 
while $\widehat{\Zb}_{A_1}$ should not contain any additional information about $\Zb_2$.
\end{example}

\textbf{Coefficient of determination.} $R^2$ measures the proportion of the variation in the dependent variables explained by the regression model~\citep{draper1998applied}, formally defined as
\begin{definition}[Population $R^2$ score] \label{def:r2}
Let $(\Zb_i, \widehat{\Zb}_{A_j})$ be a pair of random variables both taking values in $\bbR$, $i \in [N], j \in [M]$.
The coefficient of determination $R^2$ score for predicting $\Zb_i$ from $\widehat{\Zb}_{A_j}$
is defined as  
\begin{equation*}
    R^2(\Zb_i, \widehat{\Zb}_{A_j}) \coloneqq \dfrac{\bbV(
    \bbE[\Zb_i~|~\widehat{\Zb}_{A_j}]
    )}{\bbV(\Zb_i)},
\end{equation*}
where $\bbE$ and $\bbV$ denote the expectation and variance operators, respectively. 
\end{definition}

\textbf{Problem of $R^2$ in~\cref{ex:example0}}: Let $R^2({\Zb_1}, \widehat{\Zb}_{A_1})$ denote the $R^2$ score as defined in~\cref{def:r2}. 
Following the linear mechanism in~\cref{eq:struc_eq_cs}, $R^2(\Zb_2, \widehat{\Zb}_{A_1})$ can be expressed as
\begin{equation}
    \begin{aligned}
        R^2(\Zb_2, \widehat{\Zb}_{A_1}) 
        &= \dfrac{\bbV(
        \bbE[\Zb_2~|~\widehat{\Zb}_{A_1}]
        )}{\bbV(\Zb_2)}
        = \dfrac{\bbV(
        \bbE[a\Zb_1 + e~|~\widehat{\Zb}_{A_1}]
        )}{\bbV(a\Zb_1 + e)} \\
        &= \dfrac{a^2\bbV(
        \bbE[\Zb_1~|~\widehat{\Zb}_{A_1}]
        )}{a^2\bbV(\Zb_1) + \bbV(e)}  
        = \dfrac{a^2 \bbV(\Zb_1)}{a^2 \bbV(\Zb_1) + \bbV(e)} R^2(\Zb_1,\widehat{\Zb}_{A_1}).
    \end{aligned}
\end{equation}
\looseness=-1
Depending on the noise level $\bbV(e)$, $R^2(\Zb_2, \widehat{\Zb}_{A_1})$ can be either close to $R^2(\Zb_1,\widehat{\Zb}_{A_1})$ when $\bbV(e) \ll a^2 \bbV(\Zb_1)$ or close to 0 when $\bbV(e)$ is significantly higher than $a^2 \bbV(\Zb_1)$; in either case it does not reflect whether $\widehat\Zb_{A_1}$ identifies $\Zb_2$ or not, in the sense of~\cref{def:identif}. Ultimately, $R^2$ is a metric for predictability, not for identifiability. Using it as an identifiability metric under causal dependency can lead to misinterpretation~\citep{gamella2025sanity}.

\begin{remark}[Other problems of $R^2$ score]
$R^2$ is designed to measure how well a \emph{linear} model fits between two random variables. 
When the fitted model is nonlinear, $R^2$ can yield values outside $[0,1]$, which can be misleading. See also~\citet{cameron1997r} for more details.
\end{remark}

\textbf{Mean correlation coefficient (MCC).} Intuitively, MCC measures the \textit{component-wise correspondence} between the learned representation $\widehat{\Zb}$ and the ground truth latent variables $\Zb$. 
When using MCC, it is required to have the same latent and encoding dimensions. 
We restate the definition of the MCC as follows.
\begin{definition}[Mean correlation coefficient] \label{def:mcc}
    \begin{equation*}
        \text{MCC} = \dfrac{1}{N} \max_{\pi \in \text{perm[N]}} \sum_{i=1}^N \vert \text{Corr} (\Zb_i, \widehat{\Zb}_{\pi(i)}) \vert ,
    \end{equation*}
where $\text{Corr}(\cdot,\cdot)$ refers to the Pearson correlation under linear relationship and Spearman correlation in the nonlinear case. 
\end{definition}
However, we notice that MCC cannot capture how well the representations are \textit{disentangled}, misaligning with its original purpose of measuring \textit{component-wise correspondence}. Assume in~\cref{fig:measurement_models} (b) that $\widehat{\Zb}_{A_1} = \widehat{\Zb}_1$ and $\widehat{\Zb}_{A_2} = [\widehat{\Zb}_2, \widehat{\Zb}_3]$. 
The learned representations $\widehat{\Zb}_{A_j}$ are linear mappings of their causal parents $\paHatZj$:
\begin{equation*}
        \widehat{\Zb}_{1} = s \cdot \Zb_1; \qquad
        \widehat{\Zb}_{2} = a \cdot \Zb_2 + b \cdot \Zb_3; \qquad
        \widehat{\Zb}_{3} = c \cdot \Zb_2 + d \cdot \Zb_3,
\end{equation*}
where $s, a, b, c, d \neq 0$. In this case, the MCC would obtain the highest value 1, although $\Zb_2, \Zb_3$ are still entangled in the learned representation $\hat{\Zb}$, demonstrating that MCC is inadequate in evaluating element-wise identification under causal relations. 

\textbf{Evaluation of causal relations.} Causal relations are usually evaluated with the standard metrics \emph{Structural Hamming distance} (SHD). We remark that evaluating causal discovery on the learned representations should always be done in conjunction with latent variable identification, as it is possible to achieve a perfect SHD (i.e, zero) with entangled representations, using e.g., LiNGAM~\citep{shimizu2006linear}, as shown numerically in~\cref{subsec:false_positive_shd}.

\textbf{Evaluation of disentangled representation.}
Evaluating disentangled representations (where the ground truth latent variables are assumed to be mutually independent) is comparatively easier. 
In the disentangled case, the main objective is to assess how well the learned representation aligns one-to-one with the ground truth latents. Commonly used evaluation metrics for disentangled representations include the  BetaVAE Score~\citep{higgins2017beta}, FactorVAE Score~\citep{kim2018disentangling}, Mutual Information Gap (MIG~\citet{chen2018isolating}),  DCI-disentanglement~\citep{eastwood2018framework}, Modularity~\citep{ridgeway2018learning} and SAP~\citep{kumar2017variational}. Broadly, evaluating learned representations can be viewed as a two-stage procedure, first estimating the relationship between latent variables and representations, and then aggregating this information into a single score~\citep{locatello2020sober}. In some way, our test can be seen as following the same strategy, although evaluating variable-level correspondence is less straightforward given underlying causal relationships, making it a fundamentally more challenging and understudied problem.


\section{Experiments} \label{sec:exp}
\looseness=-1 This section demonstrates the validity of the proposed T-MEX score in various causal reasoning settings. We first focus on \emph{covariate adjustment} in numerical simulations, using T-MEX to evaluate both identifiability~\pcref{def:identif} and causal validity~\pcref{def:causally_valid_model} of the representations~\pcref{subsec:simulated_exp}. Next, we move on to \emph{treatment effect estimation} in high-dimensional ecological video analysis, where we demonstrate that T-MEX effectively characterizes how well the learned representation supports answering downstream causal questions~\pcref{subsec:istant}. For both experiments, we estimate T-MEX based on the projected covariance 
measure (PCM) test \citep{lundborg2024projected} implemented in the python package \texttt{pycomets} \citep{huang_pycomets}, which is an algorithm-agnostic test for conditional independence (see~\cref{app:comets} for more explanations).
Further experiment details and additional results are deferred to~\cref{app:exp_details}.

\subsection{Numerical Simulation}
\label{subsec:simulated_exp}
This experiment validates our proposed T-MEX evaluation metric through a controlled numerical simulation. We leverage CRL to model confounders and perform backdoor adjustment to estimate the average treatment effect (ATE). We report both $R^2$ and the ATE bias, demonstrating that T-MEX closely tracks the ATE bias and provides a reliable measure of representation quality, whereas $R^2$ fails to yield consistent or meaningful conclusions.

\begin{figure}[htb]
\centering
\begin{minipage}{0.48\textwidth}
    \resizebox{0.9\linewidth}{!}{




\begin{tikzpicture}[loose/.style={inner sep=.7em},
    oval/.style={ellipse,draw}]
    \tikzstyle{var}=[circle, draw, thick, minimum size=8.5mm, font=\small, inner sep=1]
    \tikzstyle{arrow}=[-latex, thick]
    \tikzstyle{doublearrow}=[latex-latex, thick]
    \tikzstyle{dashedarrow}=[-latex, thick, dashed]




    \node[var] (Z1) at (-1.5, 0) {$\Zb_1$};
    \node[var] (Z2) at (0, 0) {$\Zb_2$};
    \node[var] (Z3) at (1.5, 0) {$\Zb_3$};
    \node[var, fill=gray!30] (Z4) at (3, 0) {$\Zb_4$};
    \node[var, fill=gray!30] (Z5) at (4.5, 0) {$\Zb_5$};

    \node[var, fill=gray!30] (ZhA1) at (-1.5, -1.5) {$\widehat\Zb_{A_1}$};
    \node[oval, thick, dashed, fit = (Z1) (Z2) (Z3), inner ysep=7pt, inner xsep=-7pt] (W) {};
    \node[inner sep=-2pt] (X) at (-1.2, 1.1) {$\Xb$};

    \draw[arrow] (Z1) -- (Z2); 
    \draw[arrow] (Z2) -- (Z3);
    \draw[arrow] (Z4) -- (Z5);
    \draw[-latex, thick] (Z1) to [out=-45,in=-145] (Z3);
    \draw[-latex, thick] (Z1) to [out=30,in=145] (Z4);
    \draw[-latex, thick] (Z1) to [out=45,in=145] (Z5);
    \draw[arrow] (Z1) -- (ZhA1);
    \draw[-latex, thick] (Z5) to [out=-145,in=-30] (Z2);

\end{tikzpicture}} 
    \captionof{figure}{Measurement model containing the \emph{latent} causal variables $\Zb_1$, $\Zb_2$, and $\Zb_3$ (white nodes) and 
\emph{observed} (also termed ``\emph{directly measured}" in~\cref{def:measurement_model}) causal variables $\Zb_4$ and $\Zb_5$ (gray nodes). 
The entangled observable $\Xb$ is shown as a dashed oval. $\widehat{\Zb}_{A_1}$ denotes the exclusive measurement~\pcref{defn:exclusivity} of $\Zb_1$.}
\label{fig:simu_measurement_model}
\end{minipage}
\hfill
\begin{minipage}{0.48\textwidth}
    \includegraphics[width=0.85\linewidth]{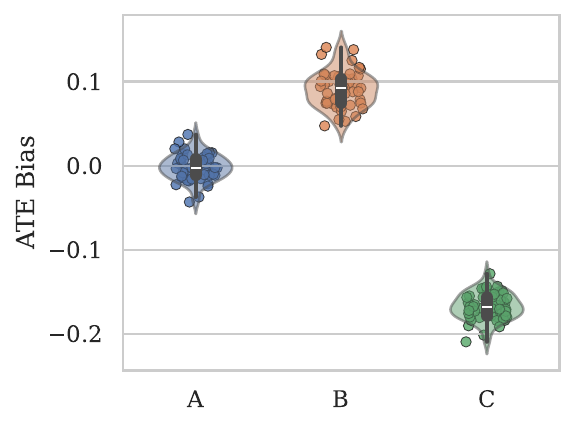}
    \captionof{figure}{\looseness=-1 \emph{T-MEX tracks the absolute bias of the ATE estimates} of $\Zb_4$ on $\Zb_5$ where $\widehat{\Zb}_1$ is conditioned on as the back door adjustment.}
    \label{fig:simu_ate_bias}
\end{minipage}
\end{figure}

\looseness=-1\textbf{Experiment settings.} 
We generate five causal variables, $\Zb_{i}$ for $i\in [5]$ according to a linear structural causal model (see \cref{app:numerical_simulation}), where two of the causal variables, $\Zb_4$ and $\Zb_5$, are \emph{observed} (also termed ``\emph{directly measured}" in~\cref{def:measurement_model}). 
The entangled observations $\Xb := f(\Zb_1, \Zb_2, \Zb_3)$ are generated by applying a diffeomorphism $f: \RR^3 \to \RR^3$, implemented as an invertible MLP, on the causal variables.
Our \emph{target causal task is to estimate the ATE of $\Zb_4$ on $\Zb_5$}. 
As the true causal relationship between $\Zb_4$ and $\Zb_5$ is linear, 
we can construct a consistent causal estimator where $\Zb_1$ is adjusted using linear regression, which is invariant up to \emph{bijective transformations} of $\Zb_1$~\pcref{app:causal_implication}.
Although $\Zb_1$ is latent and cannot be directly adjusted for, one can \emph{measure} it through a bijective transformation $\widehat{\Zb}_{A_1} := h(\Zb_1)$ which is obtained from the entangled observation $\Xb$. Note that in this case, $\widehat{\Zb}_{A_1}$ \emph{exclusively measures}~\pcref{defn:exclusivity} the confounder $\Zb_1$, as depicted in~\cref{fig:simu_measurement_model}.
We train three different CRL models based on the identifiable learning algorithm proposed by~\citet{yao2023multi} and obtain samples of the measurement variable $\widehat{\Zb}_{A_1}$:
\begin{itemize}
    \item \textbf{Model A}: a sufficiently trained model from which we expect the learned representation $\widehat{\Zb}_{A_1}^A$ (where by a slight abuse of notation, the superscript represents the model indicator) to \emph{exclusively measure} $\Zb_1$; \item \textbf{Model B}: an insufficiently trained model with unclear latent-measurement correspondence;
    \item \textbf{Model C}: a corrupted version of Model A where the representation $\widehat{\Zb}_{A_1}^C$ is defined as a linear mixing of the identified representation $\widehat{\Zb}_{A_1}^A$ and $\Zb_2, \Zb_3$.
\end{itemize}

\textbf{Results.} \cref{tab:res_simu} summarizes the T-MEX scores together with the coefficient of determination $R^2$ for all three models A, B and C, presented as \texttt{mean}$\pm$\texttt{sd}. \emph{For statistical validity}, we compute the results using $50$ simulated datasets
from each model, with each dataset containing $4096$ observations.
Further details about the test results are provided in~\cref{app:numerical_simulation}.
\Cref{tab:res_simu} shows that a sufficiently trained model (Model A) achieves a low T-MEX score, indicating that the learned representation $\widehat{\Zb}_{A_1}$ exclusively measures the latent variable $\Zb_1$. In contrast, the insufficiently trained and corrupted models (Models B and C) exhibit high T-MEX scores, demonstrating misalignment between the learned representation and the hypothesized measurement model (\cref{fig:simu_measurement_model}).
\Cref{fig:simu_ate_bias} presents the ATE bias estimated from the learned representations of all three models. We observe a strong correlation between T-MEX and the absolute bias of the ATE, validating T-MEX as a reliable indicator of the causal validity of the learned representation~\pcref{def:causally_valid_model}, whereas $R^2$ fails to show a clear correspondence with the ATE bias because $R^2$ was relatively high for all three latent variables as shown in~\cref{tab:res_simu}.

\begin{table}[hbt]
\centering
\caption{\looseness=-1 T-MEX, $R^2$ scores, and Spearman correlation coefficients of the learned representations (presented as \texttt{mean}$\pm$\texttt{std}) of \textbf{model A} (sufficiently trained, i.e., $\widehat{\Zb}_1$ exclusively measures $\Zb_1$), \textbf{model B} (insufficiently trained model with unclear latent-measurement correspondence) and \textbf{model C} (manually corrupted representation by linearly mixing $Z_2, Z_3$ with the representation of model A) based on $50$ simulated datasets, where each dataset contains $4096$ observations.}
\label{tab:res_simu}
\resizebox{\linewidth}{!}{
\begin{tabular}{@{}rrrrrrrr@{}}
\toprule
\multirow{2}{*}{\textbf{Model}}  & \multicolumn{1}{c}{\multirow{2}{*}{\textbf{T-MEX ($\downarrow$)}}} & \multicolumn{3}{c}{$\mathbf{R^2}$}  & \multicolumn{3}{c}{\textbf{Spearman Cor.~Coef.}}    \\ 
\cmidrule(lr){3-5} \cmidrule(lr){6-8} & \multicolumn{1}{c}{}  & $\Zb_1$ & $\Zb_2$ & $\Zb_3$ & $\Zb_1$ & $\Zb_2$ & $\Zb_3$ \\ 
\midrule
\textbf{A} & $0.1200\pm 0.3283$   & $0.9984 \pm 0.0001$   & $0.7516 \pm 0.0064$ & $0.8001 \pm 0.0006$ & $1.0000 \pm 0.0000$ & $0.8568 \pm 0.0044$ & $0.8864 \pm 0.0040$ \\
\textbf{B} & $1.1800 \pm 0.3881$ &  $0.6665 \pm 0.0078$  & $0.8305 \pm 0.0032$ &  $0.8707 \pm 0.0027$ & $0.8434 \pm 0.0061$ & $0.9602 \pm 0.0017$ & $0.9908 \pm 0.0004$\\ 
\textbf{C} & $2.0000 \pm 0.0000$ &  $0.9394 \pm 0.0016$  & $0.5421 \pm 0.0096$ &    $0.6627 \pm 0.0084$ & $0.9673\pm 0.0013$ & $0.7215 \pm 0.0076$ & $0.8016 \pm 0.0062$ \\ \bottomrule
\end{tabular}
}
\end{table}

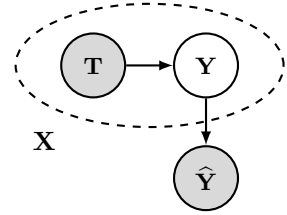
\begin{wrapfigure}{r}{0.4\textwidth}
\vspace{-40pt}
    \centering
    \resizebox{0.54\linewidth}{!}{\begin{tikzpicture}[loose/.style={inner sep=.7em},
    oval/.style={ellipse,draw}]
    \tikzstyle{var}=[circle, draw, thick, minimum size=8.5mm, font=\small, inner sep=1]
    \tikzstyle{arrow}=[-latex, thick]
    \tikzstyle{doublearrow}=[latex-latex, thick]
    \tikzstyle{dashedarrow}=[-latex, thick, dashed]


    \node[var, fill=gray!30] (T) at (0, 1.5) {$\Tb$};
    \node[var] (Y) at (1.5, 1.5) {$\Yb$};
    \node[var, fill=gray!30] (Yhat) at (1.5, 0) {$\widehat{\Yb}$};
    \node[oval, thick, dashed, fit = (T) (Y), inner ysep=4pt, inner xsep=2pt] (W) {};
    \node[inner sep=-2pt] (X) at (-0.65, 0.5) {$\Xb$};

    \draw[arrow] (Y) -- (Yhat); 
    \draw[arrow] (T) -- (Y);
\end{tikzpicture}}
    \caption{Measurement Model for the causal task in ISTAnt. $\Tb$ denotes the treatment (chemical exposure) and the \emph{latent} outcome $\Yb$ represents the ant's grooming behavior. Observable $\Xb$ (video recordings) is represented using a dashed oval. The measurement $\widehat{\Yb}$ \emph{exclusively measures}~\pcref{defn:exclusivity} $\Yb$.}
    \label{fig:istan_measurement_model}
    \vspace{-15pt}
\end{wrapfigure}

\subsection{Real-world Ecological Experiment: ISTAnt}
\label{subsec:istant}
\looseness=-1 This experiment validates the T-MEX score on ISTAnt~\citep{cadei2024smoke}, a real-world ecological benchmark designed for treatment effect estimation. We show a strong correlation between T-MEX and the absolute bias of the ATE, demonstrating that T-MEX can reliably evaluate the causal validity of learned representations under the challenge of high-dimensional real-world data.

\textbf{Experiment settings.} ISTAnt consists of video recordings of ant triplets with occasional grooming behavior. 
\emph{The goal is to extract a per-frame representation for supervised behavior classification (grooming or not) to estimate the ATE of an intervention (exposure to a certain pathogen)}.
Retrieving causally valid representations in this case is challenging as we have more non-annotated than annotated data, as described by \citep{cadei2024smoke}.
\Cref{fig:istan_measurement_model} depicts the hypothesized measurement model for this particular causal task, note that the treatment $\Tb$ and outcome $\Yb$ are unconfounded because the data is collected through a randomized controlled trial (RCT), meaning that the binary treatment $\Tb$ is randomly assigned.

\looseness=-1 \textbf{Results.} We compute the T-MEX score for 2,400 different models at a significance level of $\alpha=0.05$, and compare both classification accuracy and ATE bias against T-MEX. A full description of the considered models and training details is reported in~\cref{app:istant}. We only focus on the models that yield an accuracy over 80\% for meaningful statements. We observe that models with T-MEX = 0 achieve higher mean and lower variance for both accuracy and ATE bias, demonstrating that T-MEX effectively and reliably evaluates the quality of learned representations in terms of both classification performance and causal validity~\pcref{def:causally_valid_model}.

\looseness=-1 \textbf{Statistical validation.} To further assess the statistical significance between the T-MEX = 0 and T-MEX = 1 groups, we conduct a Mann-Whitney U test \citep{mann1947test} 
with the null hypothesis 
\begin{equation*}
    \Hcal_0: \mathbb{E}\Big[\vert\text{ATE Bias}\vert~\given~\text{T-MEX}=1\Big] \leq \mathbb{E}\Big[\vert\text{ATE Bias}\vert~\given~\text{T-MEX}=0\Big].
\end{equation*}
The resulting p-value of $0.0047$ leads us to reject $\Hcal_0$, 
providing strong evidence that the average absolute bias of the ATE for models with T-MEX = 1 is significantly higher than for those with T-MEX = 0.
Overall, T-MEX shows a strong correlation with the absolute bias of the ATE, validating its reliability as an evaluation metric for the causal validity of learned representations~\pcref{def:causally_valid_model}.

\textbf{Real-world implications of T-MEX.}
We emphasize that the proposed T-MEX score can be computed using only observational data, possibly with selection bias, as long as this selection bias does not change the conditional independence between measurements and causal variables. Instead, calculating the ATE bias as in~\citep{cadei2024smoke} requires a validation set that closely approximates the underlying population of the randomized controlled trial, a significantly stronger assumption that is often difficult to satisfy in real-world settings. Overall, T-MEX offers a convenient and accessible evaluation metric that reliably quantifies the usefulness of the learned representation for a causal downstream task, without the need for additional identifying assumptions.

\begin{figure}[htbp]
    \centering
    \includegraphics[width=\textwidth]{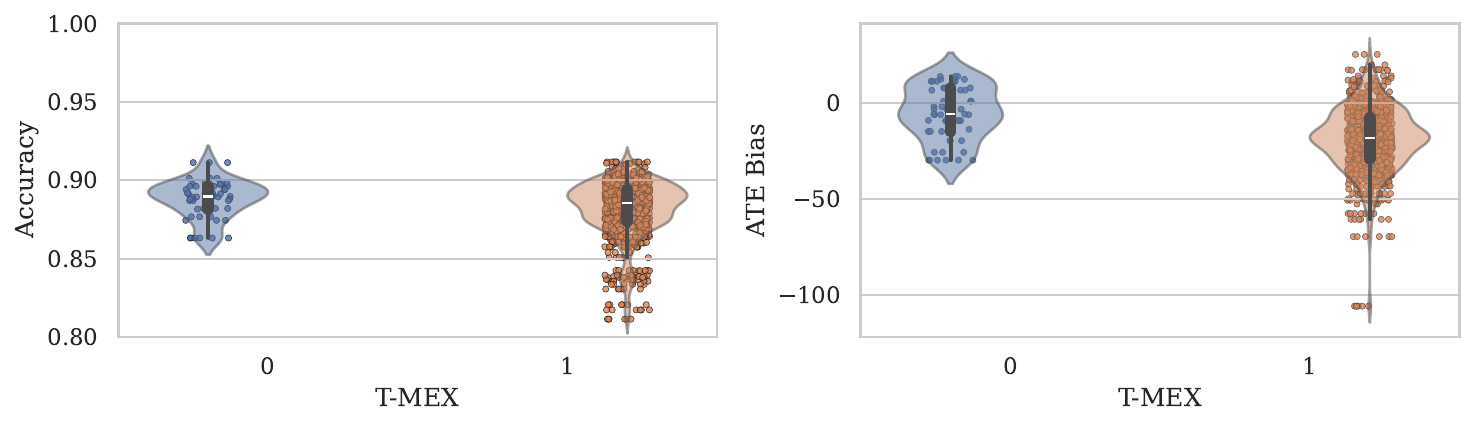}
    \caption{\emph{T-MEX reflects model performance in terms of both classification accuracy and causal validity~\pcref{def:causally_valid_model}}.
    Compared to their counterparts, models with lower T-MEX achieve consistently high accuracy (\emph{Left}) and center their ATE bias near zero with reduced variance (\emph{Right}).}
    \label{fig:exp}
\end{figure}

\subsection{Evaluation of T-MEX Properties using Synthetic Data}

In this section, we outline additional experiments designed to assess specific properties of T-MEX, including its scalability, robustness across different test choices, and behavior under weak or nonlinear causal relations as well as noisy measurements. The corresponding experimental details and results are provided in~\cref{app:add_exp}.


\textbf{Reliability and scalability with higher-dimensional latent variables.} It is well known that the statistical power of conditional independence (CI) tests deteriorates as the dimensionality of the conditioning set increases---a limitation shared by most CI methods \citep[e.g., ][]{shah2020hardness, strobl2019approximate, zhang2012kernel}. We examine the reliability and scalability of T-MEX given data generated from a \emph{non-linear location-scale SCM} with up to $50$ latent nodes in \cref{tab:T-MEX_higher_dims} in \cref{app:add_exp}. Since our proposed framework is agnostic to CI test choices, T-MEX can be easily scaled to larger dimensions by adapting more powerful and scalable test methods as they are developed.

\textbf{Consistency of T-MEX under different test choices}. CI testing is an important component in our framework, and choosing a valid and powerful test is essential. We chose PCM test in our experiments for its theoretical guarantees on the validity (type I error control) and power of the test, making it a reliable choice of test for the exclusivity claim. We also compare T-MEX scores using different conditional independence tests in \cref{tab:T-MEX_CI_tests} in \cref{app:add_exp}, based on the same experimental setup as in \cref{subsec:simulated_exp}. In this experiment, we see that T-MEX remains consistent across different test choices. 

\textbf{Consistency of T-MEX under noisy measurements.} 
As discussed in~\cref{rem:noisy_measurements}, noise in the measurement functions can be treated as additional independent latent variables. 
This means that if the noise is independent of the existing causal variables, the conditional independencies required for T-MEX remain intact. 
In such cases, one can safely compute T-MEX without explicitly modeling the noise; the score will remain valid. The empirical T-MEX value is shown to be robust under noise, as shown in~\cref{tab:T-MEX_noise} in \cref{app:add_exp}.

\textbf{T-MEX under weak causal relations.}
Building on the experiment in~\cref{subsec:simulated_exp}, we also examine how T-MEX behaves when the causal relationships between the latent causal variables are weak and compare it with the commonly used CRL identifiability metrics $R^2$ and MCC. See~\cref{tab:T-MEX_weak_causal} in \cref{app:add_exp}. In this case, T-MEX correctly reflects the entanglement in the representation, while both $R^2$ and MCC give a misleading score suggesting perfect element-wise identification. 

\textbf{Alignment between T-MEX and ATE under nonlinear SCM}.
Extending the results in~\cref{subsec:simulated_exp}, we further examine the correspondence between T-MEX and ATE under a nonlinear setting. As shown in~\cref{tab:t-mex_ate_nonlinear} in \cref{app:add_exp}, the results are highly similar to the linear case~\cref{subsec:simulated_exp}, T-MEX closely aligns with the absolute values of the ATE bias, effectively evaluating causal representations for downstream inference tasks with nonlinear causal relations.



\vspace{-10pt}
\section{Conclusion and Limitations}
\label{sec:conclusion_and_limitation}

\looseness=-1 This paper introduces a novel perspective on Causal Representation Learning (CRL) based on a measurement model framework, in which causal representations are treated as proxy measurements of latent causal variables~\pcref{sec:measurement_model}. 
This perspective provides a flexible framework that unites CRL identification theory with downstream task assumptions via measurement functions, yielding a principled way to evaluate representation quality.
More specifically, we propose a new evaluation metric, Test-based Measurement EXclusivity (T-MEX) score, which quantifies the discrepancy between a given measurement model (specified by a CRL algorithm, a causal task, or ground truth knowledge) and the joint distribution of causal and measurement variables (representation outputs of a CRL model) using conditional independence tests~\pcref{sec:evaluation_metric}. 
Because these conditional independence tests impose no parametric assumptions, our T-MEX score remains broadly applicable. However, like any statistical procedure, these tests are subject to sampling variability and potential statistical errors, so the reliability of T-MEX depends on which test is chosen. By remaining agnostic about the specific test, we empower practitioners to tailor the score to whatever assumptions they are willing to make (e.g., parametric or non-parametric).
We demonstrate, using both simulations~\pcref{subsec:simulated_exp} and real-world video analysis~\pcref{subsec:istant}, that our proposed T-MEX score effectively quantifies the identification and causal validity of the learned representation~\pcref{def:causally_valid_model}. This provides a convenient and practical evaluation scheme for representation quality in real-world scenarios, especially when the true treatment effect bias is unavailable, such as in the absence of randomized studies.

\subsubsection*{Acknowledgment}
This research was funded in whole or in part by the Austrian Science Fund (FWF) 10.55776/COE12. For open access purposes, the author has applied a CC BY public copyright license to any accepted manuscript version arising from this submission.


\clearpage
\bibliographystyle{abbrvnat}
\bibliography{refs} 



\clearpage
\appendix
\addcontentsline{toc}{section}{Appendix} 
\part{Appendix} 
{
  \hypersetup{linkcolor=DarkGreen}
  \parttoc 
}
\makenomenclature
\renewcommand{\nomname}{} 

\section{Notation and Terminology}
\label{app:notations}

This section summarizes the symbols used throughout the paper.
\vspace{-2em}
\nomenclature[0]{\(\Zb\)}{Causal variables}
\nomenclature[1]{\(\Xb\)}{Observables}
\nomenclature[2]{\(N\)}{Dimension of the causal variables $\Zb$}
\nomenclature[2]{\(D_j\)}{Dimension of the representation $\widehat{\Zb}_{A_j}$}
\nomenclature[3]{\(n\)}{Number of samples for the statistical tests for T-MEX}
\nomenclature[5]{\(\bbP(\cdot)\)}{Probability operator}
\nomenclature[6]{\(\bbE(\cdot)\)}{Expectation operator}
\nomenclature[7]{\(\mathbbm{1}(\cdot)\)}{Indicator function}

\printnomenclature

\section{Preliminaries}
\label{app:preliminaries}

\begin{definition}[Block-identifiability~\citep{von2021self}] \label{def:identif}
A set of latent variables $\Zb \in \bbR^{d_z}$ is block-identified by a representation $\widehat{\Zb} \in \bbR^{d_{\hat{z}}}$ if there exists a bijection 
$h:\bbR^{d_z}\to\bbR^{d_{\hat{z}}}$ such that $\widehat{\Zb} = h(\Zb)$. 
\end{definition}


\section{Proofs and Algorithms}
\label{app:proof}

This section includes the proof for~\cref{prop:trex_expected_score} and the algorithm to compute the T-MEX score.

\SetKwInput{KwData}{Input}
\SetKwInput{KwResult}{Output}
\DontPrintSemicolon
\begin{algorithm}[!htb]
\caption{Compute T-MEX score from one set of samples} \label{alg:comp_trex}
\KwData{Paired samples of causal variables and measurement variables 
$\{\zb,\widehat{\zb}_{A_1},\ldots,\widehat{\zb}_{A_M}\}$ where $\zb\in\bbR^{n \times N}$ and 
$\widehat{\zb}_{A_j}\in\bbR^{n\times D_j}$ for $j\in [M]$, 
adjacency matrix of the measurement model $V\in \{0,1\}^{N\times M}$, 
a set of statistical tests for $\{\varphi_{ij}\}_{i\in[N],j\in[M]}$ for \eqref{eq:null_hypo}, 
where for all $i\in [N], j\in [M]$, $\varphi_{ij}: \bbR^{n \times 1} \times\bbR^{n\times D_j}\times\bbR^{n\times (N-1)}\to\{0,1\}$}
\KwResult{T-MEX score of the given sample}
$\widehat{W} \gets \mathbf{0}^{N \times M}$ \; 
\For{$i \in [N]$}{
   \For{$j \in [M]$} {
    $\widehat{W}_{ij} \gets \varphi_{ij}(\zb_i, \widehat{\zb}_{A_j}, \zb_{[N]\setminus\{i\}})$ \; 
   }
}
\Return $\sum_{i=1}^N \sum_{j=1}^M\mathbbm{1}(V_{ij} \neq \widehat{W}_{ij})$   
\end{algorithm}

\tmexExpectation*
\begin{proof}
Suppose the joint distribution of $(\Zb, \widehat\Zb)$ aligns with the conditional 
independencies indicated by the adjacency matrix $V$, that is, for all $i\in[N]$ and $j\in[M]$, 
if $V_{ij} = 0$, it holds that $\widehat\Zb_{A_j}\indep \Zb_i\given \Zb_{[N]\setminus\{i\}}$; 
if $V_{ij} = 1$, it holds that $\widehat\Zb_{A_j}\dep \Zb_i\given \Zb_{[N]\setminus\{i\}}$.  

Fix a significance level $\alpha \in(0,1)$. Suppose for all $i\in[N]$ and all $j\in[M]$, 
the statistical test $\varphi_{ij}: \bbR^{n\times 1}\times \bbR^{n\times D_j}\times \bbR^{n\times (N-1)} \to \{0,1\}$ is valid at level $\alpha$ and has power at least 
$\beta\in[0,1]$ against the alternative distribution where $\widehat\Zb_{A_j}\dep \Zb_i\given \Zb_{[N]\setminus\{i\}}$. 

Then given independent sets of samples $\{\zb^k,\hat\zb^k\}_{k\in [n_{ij}]}$ for $i\in[N]$ and $j\in[M]$, and $\widehat{W}_{ij} = \varphi_{ij}(\zb_i, \hat\zb_{A_j}, \zb_{[N]\setminus\{i\}})$, 
it holds that 
\begin{itemize}
    \item if $V_{ij} = 0$, then $P(\widehat{W}_{ij} = 1) \leq \alpha$;
    \item if $V_{ij} = 1$, then $P(\widehat{W}_{ij} = 0) \leq 1-\beta$. 
\end{itemize}
Therefore, the expected value of T-MEX score is given by 
\begin{align*}
    \bbE[T\-MEX(V,\widehat{W})] &= \alpha \cdot \sum_{i,j} \mathbbm{1}(V_{ij} = 0) + (1 - \beta) \sum_{i, j} \cdot \mathbbm{1}(V_{ij} = 1) \\ 
    &\leq \alpha\cdot (MN - ||V||_1) + (1-\beta) \cdot ||V||_1
\end{align*}
where $||V||_1$ is the 1-norm of $V$. The second inequality is implied by the that each test 
$\varphi_{ij}$ is valid with level $\alpha$ and has power $\geq \beta$. 
\end{proof}

\begin{remark}
Proposition~\ref{prop:trex_expected_score} tells us that if the measurement 
model does hold for the joint distribution of the causal variables and the output representations 
from a trained CRL model, we would expect to see a ``low" T-MEX score given that we 
employ valid statistical tests that are also powerful enough to reject the null under 
alternatives. A ``low" T-MEX score does not in general refer to a $0$ score, as it depends on $V$, 
the chosen significance level $\alpha$, and the power of the test $\beta$. 
For example, let $\alpha=0.05$, we consider a valid statistical test that has the highest power, i.e., $\beta = 1$, additionally, assume the number of $0$s in $V$ is $2$, 
then the expected value of the T-MEX score is no larger than $0.05\times 2 = 0.1$. 
\end{remark}



    


\section{Experiment Details and Additional Results}
\label{app:exp_details}
This section elaborates on the experiment settings of~\cref{sec:exp}. We include further information regarding the data-generating process for the simulated experiment~\pcref{subsec:simulated_exp} and the ISTAnt dataset~\citep{cadei2024smoke} used in the ecological case study~\pcref{subsec:istant}, 
as well as additional experimental results.

\subsection{Numerical Simulation}
\label{app:numerical_simulation}

\textbf{Experiment setting.}
We consider five causal variables $(\Zb_1,\cdots,\Zb_5)$ generated based on a linear structural causal model~\citep{peters2017elements} 
\begin{equation*} 
\Zb = B \Zb + \varepsilon, 
\end{equation*}
where $\Zb \coloneqq (\Zb_1, \Zb_2, \Zb_3, \Zb_4, \Zb_5)$, $\Zb$ takes values in $\bbR^5$, 
$\varepsilon \sim\cN_5(0, I)$, and 
$B = \begin{bmatrix} 
0 & 0 & 0 & 0 & 0 \\ 
1 & 0 & 0 & 0 & 1 \\ 
1 & 1 & 0 & 0 & 0 \\
1 & 0 & 0 & 0 & 0 \\
1 & 0 & 0 & 1 & 0
\end{bmatrix}$,
which induces the partial DAG depicted in~\cref{fig:simu_measurement_model}. 
Two of the causal variables ($\Zb_4$ and $\Zb_5$) are observed (i.e., directly measured as in~\cref{def:measurement_model}), 
and the other three ($\Zb_1$, $\Zb_2$, and $\Zb_3$) are latent and we observe only a bijective 
mixing $\Xb$ of them. 

For the purpose of latent variable identification, we consider the multiview scenario in~\citep{yao2023multi} where two views $\Xb_1, \Xb_2$ are generated from different subsets of latent variables. Formally, we have
\begin{equation}
    \begin{aligned}
        \Xb_1 &= f_1(\Zb_1, \Zb_2)\\
        \Xb_2 &= f_2(\Zb_1, \Zb_3),
    \end{aligned}
\end{equation}
where $f_1, f_2: \RR^2 \to \RR^2$ are diffeomorphisms, implemented using invertible MLPs as suggested by~\citet{yao2023multi}.



\textbf{Implementation details.}
We employ the latent variable identification algorithm proposed by \citet{yao2023multi}, which guarantees that the shared latent variables among different views can be identified up to a diffeomorphism in the sense of~\cref{def:identif}.
Thus, by utilizing $\Xb_1, \Xb_2$, we can obtain a nonlinear bijective transformation of their shared latent variable $\Zb_1$. This allows us to construct a measurement model 
$\Mcal = \langle \Zb, \widehat\Zb_{A_1}, \{h_1\}\rangle$ (see~\cref{fig:simu_measurement_model}), where 
$\Zb = \{\Zb_1, \cdots,\Zb_5\}$ and 
$\widehat\Zb_{A_1} = h(\Zb_1)$ for some (unknown) smooth invertible map $h: \bbR\to\bbR$.  

We train three CRL models following the implementation settings in~\citep[Tab.~4]{yao2023multi}. 
\begin{itemize}
    \item \textbf{Model A}: a sufficiently trained model (trained for $50001$ steps) from which we expect the learned representation $\widehat{\Zb}_{A_1}^A$ (where by a slight abuse of notation, the superscript represents the model indicator) to \emph{exclusively measure} $\Zb_1$; \item \textbf{Model B}: an insufficiently trained model (trained for $51$ steps) with unclear latent-measurement correspondence;
    \item \textbf{Model C}: a corrupted version of Model A where the representation $\widehat{\Zb}_{A_1}^C := \widehat{\Zb}^A_{A_1} + 0.2\Zb_2 - 0.1\Zb_3$, i.e., a linear mixing of the representation $\widehat{\Zb}_{A_1}^A$ from Model A, and $\Zb_2, \Zb_3$.
\end{itemize}

For each of the three trained models, we generate
50 independent datasets, each containing 4096 paired samples of $\Zb, \widehat{\Zb}_{A_1}$.
We compute the respective T-MEX scores based on these generated datasets for all three models, using the the projected covariance measure \citep[PCM,][]{lundborg2024projected} implemented in \texttt{pycomets} \citep{huang_pycomets} using linear regression models to estimate the conditional means (see~\cref{app:comets}).

\begin{figure}[!hbt]
    \centering
    \includegraphics[width=.7\linewidth]{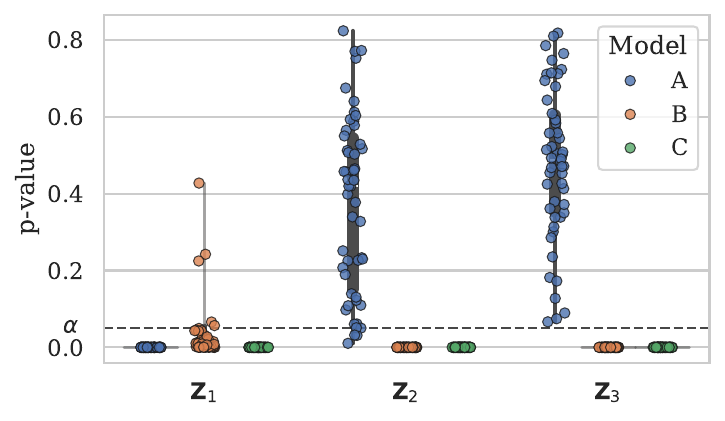}
    \caption{Violin plots of p-values from testing the conditional independencies $\widehat\Zb_{A_1}\indep \Zb_i \given \Zb_{[5]\setminus i}$ for $i\in[3]$ based on the PCM tests~\citep{lundborg2024projected}. The black dashed line is at the significance level $\alpha = 0.05$. A p-value $< \alpha$ for $\Zb_i$ means there is an edge from $\Zb_i$ to the measurement $\widehat{\Zb}_{A_1}$.}
    \label{fig:simu_pvals}
\end{figure}

\textbf{Additional results.} Since T-MEX relies on statistical testing, we further assess its statistical validity by examining the underlying p-values that lead to the test results and the T-MEX score.
\cref{fig:simu_pvals} shows the p-values resulted from testing each of the three null hypotheses:
\begin{equation*}
    \cH_0(i): \widehat\Zb_{A_1}\indep \Zb_i \given \Zb_{[5]\setminus i} \text{ for } i\in[3].
\end{equation*}
We omit $\Zb_4$ and $\Zb_5$ since they are not involved in generating the two views $\Xb_1$ and $\Xb_2$. 
 
\Cref{fig:simu_pvals} shows that Model A aligns with the measurement model in~\cref{fig:simu_measurement_model}, evidenced by (i) small p-values for $\cH_0(1)$ and (ii) approximately uniformly distributed p-values for both $\cH_0(2)$ and $\cH_0(3)$, given a valid test (see~\cref{app:comets} for further explanations).
In contrast, for Models B and C, nearly all p-values are smaller than $\alpha$, leading to rejections of the null hypotheses, which indicates that the learned representation $\widehat{\Zb}_{A_1}$ is a mixture of all three causal variables $\Zb_1, \Zb_2, \Zb_3$, and thus fails to exclusively measure $\Zb_1$.

\textbf{Computational resources.} We train the CRL models (model A, B, C) using a single node GPU (\texttt{NVIDIA GeForce RTX1080Ti}) with $10\mathrm{GB}$ of RAM, $4$ CPU cores for less than one GPU hour. ATE estimation and T-MEX computation take less than one minute on a standard CPU.






\subsection{Real-World Ecological Experiment: ISTAnt}
\label{app:istant}
\textbf{Experiment Setting.} ISTAnt is a real-world ecological benchmark designed to evaluate learned representations on downstream causal inference tasks from high-dimensional observational data. It comprises 44 ant-triplet video recordings collected through a randomized controlled trial. This benchmark adopts the problem formulation introduced by \citet{cadei2024smoke}, aiming to estimate the causal effect of specific treatments (e.g., chemical exposure) on ants’ social behavior, particularly grooming events. The experimental design and recording setup are shown in~\cref{fig:experiment_setup}; for further details, refer to~\citep[App. C]{cadei2024smoke}.

In ISTAnt, each observation (video recording) $i$ is associated with a treatment assignment $\Tb_i$ and a set of experimental covariates $\Wb_i$ (including experiment day, time of the day, batch, position within the batch, and annotator). However, only a subset of videos is annotated with the outcome of interest $\Yb_i$ (i.e., grooming events), which hinders reliable causal inference at a population level, such as treatment effect estimation. To address this challenge, ~\citet{cadei2024smoke} proposes to train a classifier on top of a pre-trained feature extractor ~\citep[e.g., DINOv2, ][]{oquab2023dinov2} using this limited set of annotated samples, to impute missing labels while still enabling valid causal inference at the population level; specifically, for estimating the Average Treatment Effect (ATE).


\begin{table}
    \centering
    \begin{minipage}[t]{\textwidth}
    \centering
        \caption{Hyperparameters for the real-world ecological experiment~\pcref{subsec:istant,app:istant}, giving rise to 2,400 model configurations in total. All other settings follow~\citep[App. C]{cadei2024smoke}.}
        \label{tab:params_istant}
        \begin{tabular}{ll}
        \toprule
         \rowcolor{Gray!20} \textbf{Hyperparameter}   & \textbf{Value(s)} \\
         \midrule
          Input Preprocessing   & \texttt{YES} / \texttt{NO}\\
          Number of Hidden Layers & 1, 2\\
          Batch Size & 64, 128, 256\\
          Adam: learning rate& 5e-2, 1e-2, 5e-3, 1e-3, 5e-4\\
          \multirow{2}{*}{Training objective} & Empirical Risk, Invariant Risk \citep{arjovsky2020invariantriskminimization},  \\
                           &  vREx \citep{krueger2021out}, Deconfounded Risk \citep{cadei2025causal}   \\
          \# Seeds & 0,1, ..., 9\\
          \bottomrule
        \end{tabular}
    \end{minipage}
\vspace{-10pt}
\end{table}

\textbf{Implementation details.} Following~\citep{cadei2024smoke}, we train 2,400 classification heads on top of DINOv2~\citep{oquab2023dinov2}, varying the architecture and training settings, and estimate the causal effect using all video samples together with the predicted labels $\widehat{\Yb}$s by AIPW estimator~\citep{robins1994estimation}. The hyperparameter configurations are summarized in~\cref{tab:params_istant}, with all other implementation details following~\citep[App. C]{cadei2024smoke}.


By contrasting with the measurement model depicted in~\cref{fig:istan_measurement_model}, we compute the T-MEX scores for all 2,400 models. 
Since we focus on models with more than 80\% prediction accuracy~\pcref{subsec:istant}, the null hypothesis $\widehat{\Yb} \indep \Yb \given \Tb$ is rejected in all cases, consistently indicating $\Yb \to \widehat{\Yb}$. Thus, we only focus on the following null hypothesis:
\begin{equation*}
    \Hcal_0: \widehat{\Yb} \indep \Tb \given \Yb,
\end{equation*}
where $\widehat{\Yb}$ denotes the predicted label and $\Yb$ the ground truth one. 
A misalignment with the measurement model in~\cref{fig:istan_measurement_model} leads to rejecting $\Hcal_0$, resulting T-MEX=1, whereas as a causally valid representation $\widehat{\Yb}$ that exclusively measures $\Yb$ gives rise to T-MEX=0.
We summarize all results in~\cref{fig:exp} and provide extended discussions in~\cref{subsec:istant}.


\textbf{Computational resources.} We run all the analyses in~\cref{subsec:istant} using $48\mathrm{GB}$ of RAM, $20$ CPU cores, and a single node GPU (\texttt{NVIDIA GeForce RTX2080Ti}) for 24 GPU hours.
Data preprocessing and feature extraction using DINOv2 account for the majority of the computational time, whereas classifier training, AIPW estimation, and the T-MEX test contribute negligibly by comparison.

\begin{figure}[h]{}  
        \hfill
        \begin{subfigure}[b]{0.48\textwidth}
            \centering   \includegraphics[height=5cm]
            {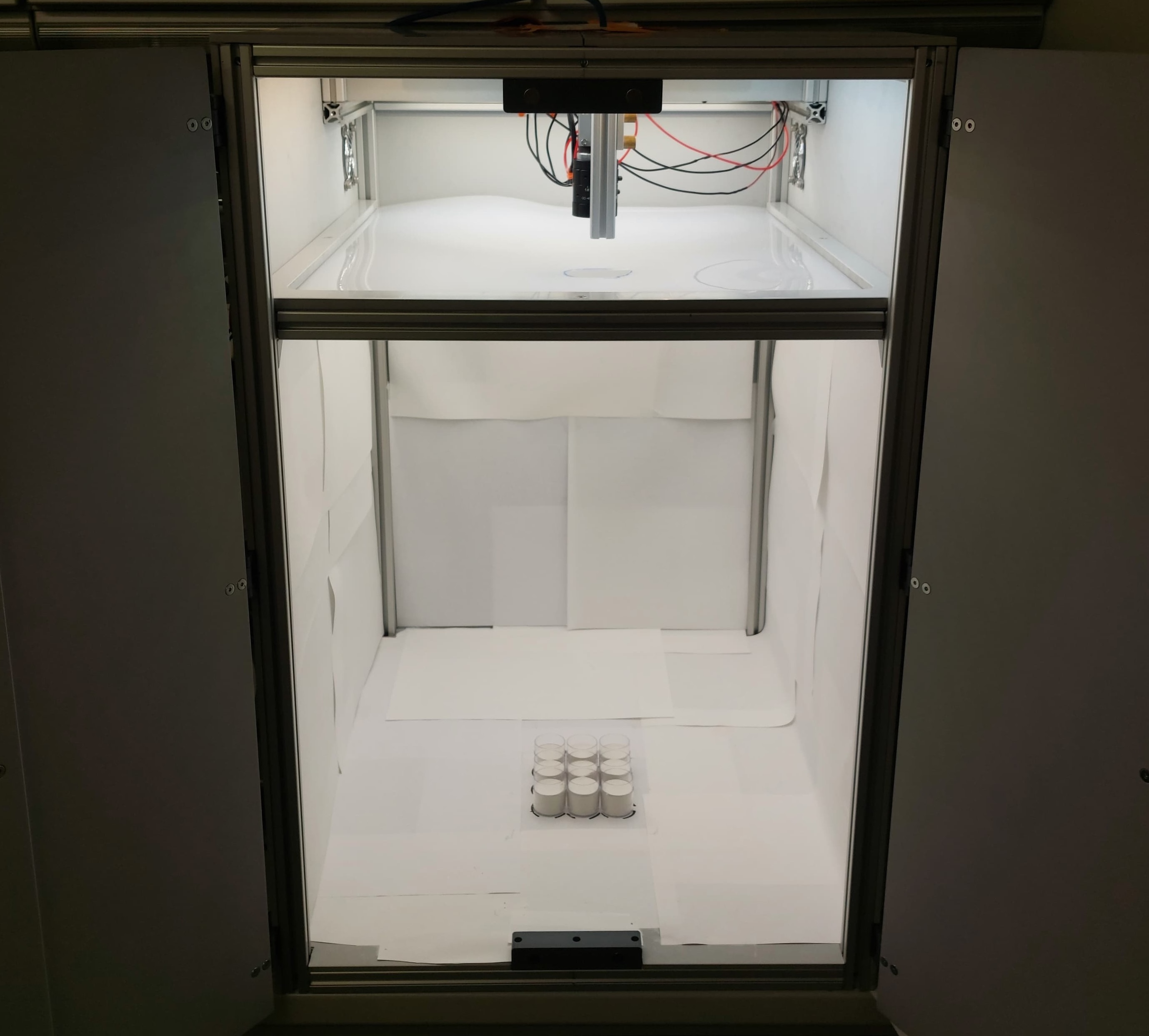} 
            \caption{Filming box}
            \label{fig:camera}
        \end{subfigure}
        \hfill
        \begin{subfigure}[b]{0.48\textwidth}
            \centering
            \includegraphics[height=5cm]{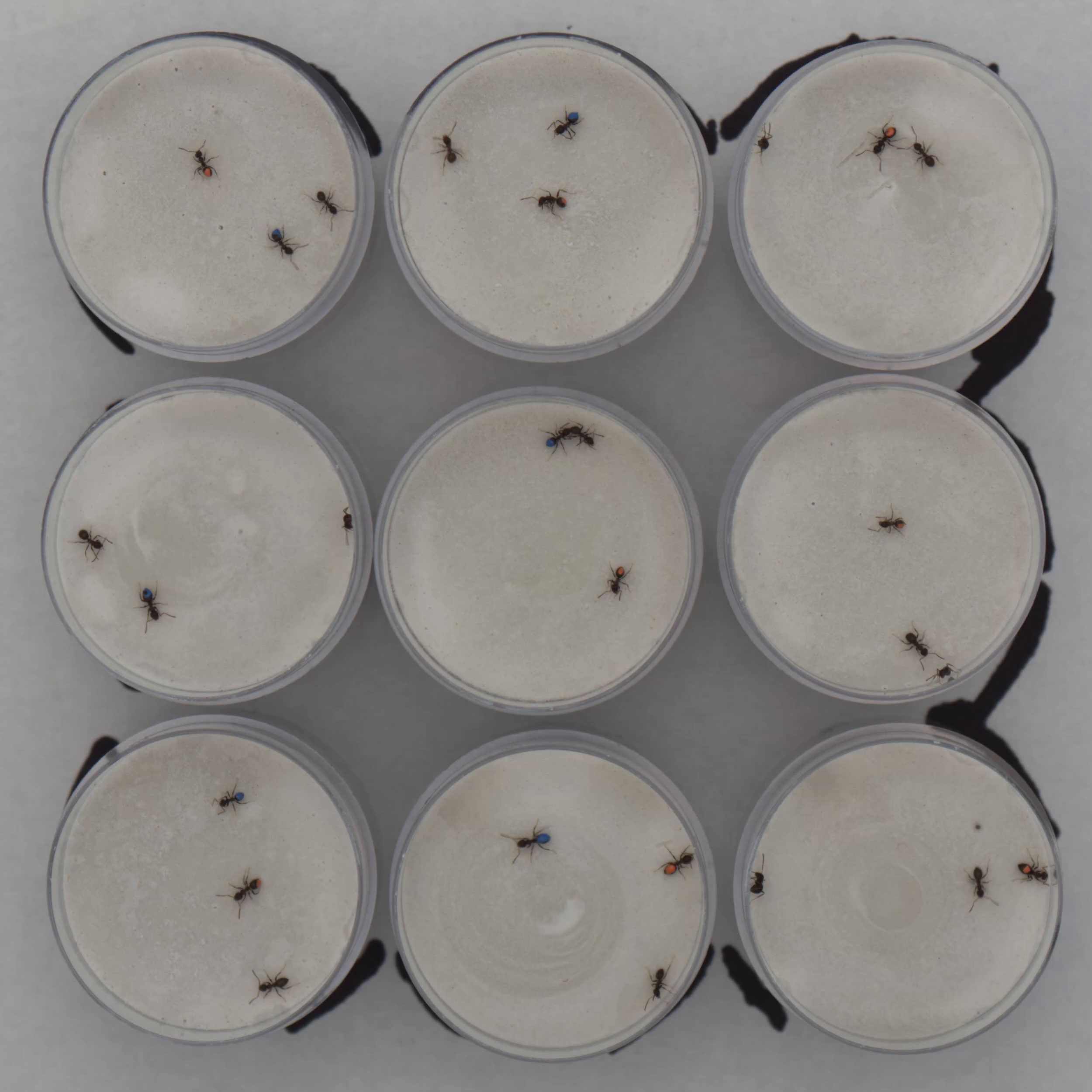}  
            \caption{Batch example}
            \label{fig:batch}
        \end{subfigure}
        \hfill
    \caption{Visualization of ISTAnt recording set-up \citep{cadei2024smoke}.}
    \label{fig:experiment_setup}
\end{figure}

\subsection{Caveats of Using SHD to Evaluate Causal Representations}
\label{subsec:false_positive_shd}
\begin{wrapfigure}{r}{0.4\textwidth}
    \centering
    \vspace{-18pt}
    \begin{tikzpicture}
    \tikzstyle{var}=[circle, draw, thick, minimum size=8.5mm, font=\small, inner sep=1]
    \tikzstyle{arrow}=[-latex, thick]
    \tikzstyle{doublearrow}=[latex-latex, thick]
    \tikzstyle{dashedarrow}=[-latex, thick, dashed]

    \node[var] (Z1) at (-1.5, 1.5) {$\Zb_1$};
    \node[var] (Z2) at (0, 1.5) {$\Zb_2$};
    \node[var] (Z3) at (1.5, 1.5) {$\Zb_3$};

    \node[var, fill=gray!30] (ZhA1) at (-1.5, 0) {$\widehat{\Zb}_{A_1}$};
    \node[var, fill=gray!30] (ZhA2) at (0, 0) {$\widehat{\Zb}_{A_2}$};
    \node[var, fill=gray!30] (ZhA3) at (1.5, 0) {$\widehat{\Zb}_{A_3}$};

    \draw[arrow] (Z1) -- (Z2); 
    \draw[arrow] (Z2) -- (Z3);
    \draw[-latex, thick] (Z1) to [out=45,in=145] (Z3);
    \draw[arrow] (Z1) -- (ZhA1);
    \draw[arrow] (Z2) -- (ZhA2);
    \draw[arrow] (Z2) -- (ZhA1);
    \draw[arrow] (Z3) -- (ZhA3);
\end{tikzpicture}
    \caption{Example measurement model, where $\widehat{\Zb}_{A_1}$ block-identifies $\Zb_1, \Zb_2$, $\widehat{\Zb}_{A_2}$ and $ \widehat{\Zb}_{A_3}$ identifies $\Zb_2, \Zb_3$ respectively.}
    \label{fig:measurment_model_shd}
    \vspace{-35pt}
\end{wrapfigure}

\textbf{Experiment Setting.} This experiment explores the potential pitfalls when directly using SHD to evaluate causal representations without properly evaluating the element-wise latent variable identification. Specifically, we consider a set of causal variables generated through the following structural equations:
\begin{equation}
\label{eq:exmaple_structure_eq}
    \begin{aligned}
        \Zb_1 &= \epsilon_1\\
        \Zb_2 &= \alpha_{12} \cdot \Zb_1 + \beta_2 \cdot \epsilon_2 \\
        \Zb_3 &= \alpha_{13} \cdot \Zb_1 + \alpha_{23} \cdot \Zb_2 + \beta_3 \cdot \epsilon_3, 
    \end{aligned}
\end{equation}
Assume the learned representation corresponds to the ground truth causal variable as follows:
\begin{equation}
\label{eq:example_measurement_eq}
    \begin{aligned}
        \widehat{\Zb}_{A_1} &= \gamma_{1} \cdot \Zb_1 + \gamma_{21} \cdot \Zb_2\\
        \widehat{\Zb}_{A_2} &= \gamma_2 \cdot \Zb_2\\
        \widehat{\Zb}_{A_3} &= \gamma_3 \cdot \Zb_3
    \end{aligned}
\end{equation}
where $\widehat{\Zb}_{A_1}$ remains a mixing of $\Zb_1$ and $\Zb_2$. The corresponding measurement model is shown in~\cref{fig:measurment_model_shd}.

\textbf{Implementation details.} We generate 100 different structure and measurement models following~\cref{eq:exmaple_structure_eq,eq:example_measurement_eq}, with all coefficients $\alpha$s and $\gamma$s sampled from $\text{Unif}[1, 10]$ and the $\beta$s sampled from $\text{Unif}[0.005, 0.02]$. We run LiNGAM~\citep{shimizu2006linear} from causal-learn~\citep{zheng2024causal} to discover the causal relationships between the measurements $\widehat{\Zb}_{A_1}, \widehat{\Zb}_{A_2}, \widehat{\Zb}_{A_3}$.

\textbf{Results.}~\cref{fig:false_positive_shd} shows the structural hamming distance of between the discovered graph on $\widehat{\Zb}$ and the ground truth one. Despite being entangled between $\Zb_1, \Zb_2$, $\widehat{\Zb}$ still yield the correct causal graph in most of the cases (77\%), as shown by the first bar in the plot. Hence, causal relations between the measurement variables should always be evaluated in conjunction with the variable identification. Otherwise, it can lead to misinterpretations as showcased by~\cref{fig:false_positive_shd}.

\textbf{Computational resources.} Data generating and causal discovery for~\cref{subsec:false_positive_shd} in total takes less than 10 minutes on a standard CPU.

\begin{figure}
    \centering
    \includegraphics[width=0.5\linewidth]{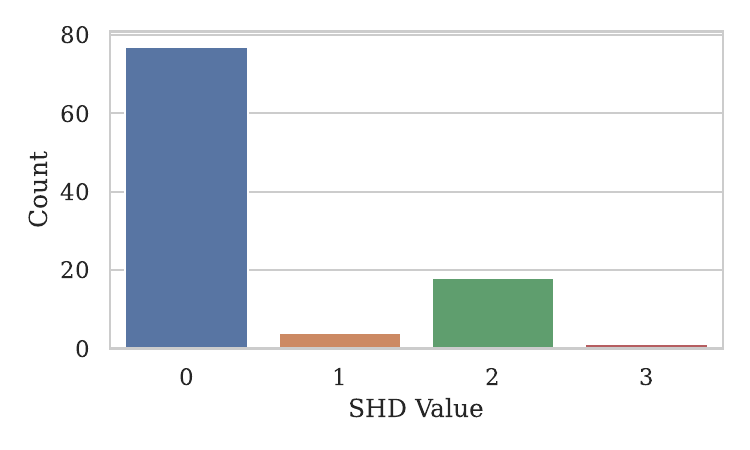}
    \caption{Structural Hamming Distance Values (SHD) of 100 structure and measurement models following~\cref{eq:exmaple_structure_eq,eq:example_measurement_eq}, where the measurement $\widehat{\Zb}_{A_1}$ is a mixing of the ground truth latent $\Zb_1, \Zb_2$. SHDs are computed between the discovered graph on $\widehat{\Zb}$ and the ground truth one.}
    \label{fig:false_positive_shd}
\end{figure}

\subsection{Additional Numerical Experiments Showcasing the Properties of T-MEX.} \label{app:add_exp}

In the following, \emph{T-MEX Oracle} is computed based on an oracle CI test with zero type I error and power 1. We compute the oracle T-MEX score to demonstrate desirable properties of the metric—such as its consistency and robustness across different conditional independence tests. This is feasible in the simulated setting, where the measurement functions are known by design. In contrast, for empirical representations learned by a CRL model, the true measurement functions are unknown, making oracle computation infeasible in practice.

\textbf{Scalability given higher-dimensional latent variables.} We simulate the causal variables based on a nonlinear SCM – a location-scale SCM as implemented by~\citet{liang2023causal}. Then, the measurement variables are simulated as a direct copy of each of the corresponding causal variables. For different numbers of latents (n-latent), we report the T-MEX score based on the Generalized Covariance Measure (GCM) test with linear regression (see \cref{app:comets} for more details about GCM), along with their standard error based on 20 random causal DAGs for the latent causal variables, each with $30$ repetitions and 1000 observations. In \cref{tab:high_dim_latent}, we see that T-MEX remains closely aligned with the T-MEX oracle in all cases, and can be efficiently computed up to 50 latents within a reasonable time, validating its applicability in moderate to high dimensions.

\begin{table}[htb]
\centering
\caption{T-MEX for higher-dimensional latents generated from nonlinear SCM}
\label{tab:T-MEX_higher_dims}
\begin{tabular}{lccc}
\toprule
\rowcolor{gray!20}\textbf{n-latent} & \textbf{T-MEX} & \textbf{T-MEX Oracle} & \textbf{time (sec)} \\
\midrule
5 & $0.0483 \pm 0.2223$ & 0 & $0.0359 \pm 0.0011$ \\
10 & $0.0033 \pm 0.0577$ & 0 & $0.1559 \pm 0.0017$ \\
20 & $0.0550 \pm 0.5187$ & 0 & $0.8080 \pm 0.0094$ \\
50 & $0.0917 \pm 0.9582$ & 0 & $10.8450 \pm 0.1004$ \\
\bottomrule
\end{tabular}
\label{tab:high_dim_latent}
\end{table}

\textbf{Consistent results when using different CI tests.} Following the same settings in \cref{subsec:simulated_exp}, we consider PCM test (see \cref{app:comets}) with random forest (RF) and Kernel Conditional Independence (KCI) test \citep{zhang2012kernel}. We see that T-MEX ranks the models consistently with the results in \cref{subsec:simulated_exp}. 

\begin{table}[h!]
\centering
\caption{T-MEX under different CI Tests}
\label{tab:T-MEX_CI_tests}
\begin{tabular}{lcc}
\toprule
\rowcolor{gray!20}\textbf{CI test} & \textbf{Model} & \textbf{T-MEX} \\
\midrule
PCM (RF) & A & $0.0000 \pm 0.0000$ \\
PCM (RF) & B & $0.8000 \pm 0.7559$ \\
PCM (RF) & C & $2.0000 \pm 0.0000$ \\
\addlinespace
KCI & A & $0.0400 \pm 0.1979$ \\
KCI & B & $0.1600 \pm 0.4218$ \\
KCI & C & $2.0000 \pm 0.0000$ \\
\bottomrule
\end{tabular}
\end{table}

\textbf{T-MEX under weak causal relations.}
Building on the experiment in~\cref{subsec:simulated_exp}, we next examine how T-MEX behaves under weak causal relations, and compare it with the previous CRL identifiability metrics such as $R^2$ and MCC. The data is generated from a linear SCM with three latent causal variables following the measurement model described in~\cref{fig:false_positive_shd}. The linear coefficients between the causal variables are sampled uniformly between 0.01 and 0.1.

\begin{table}[h!]
    \centering
    \caption{T-MEX under weak causal relations}
    \label{tab:T-MEX_weak_causal}
    \begin{tabular}{cccc}
    \toprule
        \rowcolor{gray!20} \textbf{T-MEX Oracle} & \textbf{T-MEX} & $\mathbf{R^2}$ & \textbf{MCC} \\
        \midrule
        1  & $1.0100 \pm 0.0995$ & $1.0000 \pm 0.0000$ & $1.0000 \pm 0.0000$\\
        \bottomrule
    \end{tabular}
\end{table}

Under identifiability assumptions that guarantee element-wise correspondence, T-MEX correctly detects the mixing effect and gives a score near one (note that T-MEX = 0 indicates perfect alignment).
In contrast, both $R^2$ and MCC fail to reflect this violation. Despite the entanglement in $\widehat{\Zb}_1$, they still assign the maximum score of 1, misleadingly suggesting perfect element-wise identification (see~\cref{tab:T-MEX_weak_causal}).

\textbf{Alignment between T-MEX and ATE under nonlinear SCM.}
Extending~\cref{subsec:simulated_exp}, we further examine the correspondence between T-MEX and ATE under the more general nonlinear setting. We consider the same causal graph as given in~\cref{subsec:simulated_exp}, where $\Zb_1$ confounds $\Zb_2$ and $\Zb_3$, and needs to be adjusted for a valid treatment effect estimation. A ``perfect" model means $\widehat{\Zb}_1$ exclusively measures $\Zb_1$, whereas a mixed model indicates an entangled representation, i.e., $\widehat{\Zb}_1$ mixes $\Zb_1$ and $\Zb_2$.

\begin{table}[h!]
    \centering
    \caption{T-MEX v.s. ATE bias under nonlinear SCM}
    \label{tab:t-mex_ate_nonlinear}
    \begin{tabular}{ccc}
    \toprule
       \rowcolor{gray!20}\textbf{Model}  & T-\textbf{MEX} & \textbf{abs(ATE bias)}  \\
       \midrule
       Perfect  & $0.0100 \pm 0.1000$ & $0.1532 \pm 0.0283$\\
       Mixed & $1.0000 \pm 0.0000$ & $0.5357 \pm 0.0540$\\
       \bottomrule
    \end{tabular}
\end{table}

As shown in~\cref{tab:t-mex_ate_nonlinear}, the results are highly similar to the linear case~\cref{subsec:simulated_exp}, T-MEX closely aligns with the absolute values of the ATE bias, effectively evaluating causal representations for downstream inference tasks with nonlinear causal relations.

\textbf{Consistency of T-MEX under noisy measurements.}
We compare the empirical T-MEX value with/without noise. The noisy measurements are generated by adding Gaussian noise to the original latents. I.e., the measurement function writes 
\begin{equation*}
    h(\zb) = \zb + e 
\end{equation*}
with $e$ independent Gaussian noise. Results are evaluated over 100 datasets with 1000 samples each.
~\cref{tab:T-MEX_noise} shows that the empirical T-MEX score largely remains consistent under noisy measurements, validating its practical usability.
 
\begin{table}[h!]
\centering
\caption{T-MEX w/wo noise}
\label{tab:T-MEX_noise}
\begin{tabular}{ccc}
\toprule
\rowcolor{gray!20} \textbf{T-MEX Oracle}  &  \textbf{without noise} & \textbf{with noise}\\
\midrule
 0 & $0.0100 \pm 0.0100$ & $0.3900 \pm 0.6497$\\
\bottomrule
\end{tabular}
\end{table}

\section{Background on Conditional Independence Testing} \label{app:comets}

Testing conditional independence of two random variables $\Xb$ and $\Yb$ given a third random variable $\Zb$ 
is known to be a difficult problem if $Z$ is a continuous variable \citep{shah2020hardness}. 
The goal of conditional independence test is to test the null hypothesis 
\begin{equation*}
    \cH_0: \Xb \indep \Yb \given \Zb.
\end{equation*}
\citet{shah2020hardness} have shown that there is no valid test (i.e., a test that guarantees a Type I error rate 
to be no larger than the given significance level $\alpha$) that has power against all alternatives. 

Consider univariate variables $\Xb, \Yb, \Zb$, the generalized covariance measure (GCM) test proposed in \citet{shah2020hardness} aims to test an 
implication of conditional independence which can be written as the following null hypothesis:
\begin{equation*}
    \cH_0^{\text{GCM}}: \bbE[(\Yb - \bbE[\Yb\given \Zb])(\Xb - \bbE[\Xb\given \Zb])] = 0. 
\end{equation*}
The validity of the GCM test thus relies on that the conditional means $\bbE[\Yb\given \Zb]$ and 
$\bbE[\Xb\given \Zb]$ can be learned at sufficiently fast rates. 
It turns out that GCM does not have power against any alternative for which 
$\bbE[\cov(\Xb, \Yb \given \Zb)] = 0$ but $\Xb \dep \Yb \given \Zb$~\citep{lundborg2024projected}.

The projected covariance measure (PCM) proposed by \citet{lundborg2024projected} improves the power issue of GCM by testing a different implication of conditional independence:
\begin{equation*}
   \cH_0^{\text{PCM}}: \bbE[\Yb\given \Xb, \Zb] = \bbE[\Yb \given \Zb]. 
\end{equation*}
Similar to GCM, to ensure its validity, PCM also requires that the conditional means can be learned sufficiently fast, which is satisfied in our experiments~\pcref{sec:exp}.

\looseness=-1 There are other conditional independence tests such as mutual information based methods \citep{ai2024testing, runge18conditional} and kernel-based methods \citep{fernandez2024general, strobl2019approximate, zhang2012kernel}. 
We opted for PCM in our experiments for its computational advantage and theoretical guarantees on 
its validity under a flexible, model-agnostic framework. More discussions on the usage of PCM 
and GCM can be found in \citet{kook2024algorithm}. 
Notably, T-MEX is a general evaluation metric for causal representations that does not specify any particular type of tests, allowing practitioners to choose other testing methods that are more suitable for their problem settings.

\section{Extended Discussion}
\label{app:causal_implication}
This section elaborates on the implications of learned representations for downstream causal tasks. As briefly discussed in the main paper (following~\cref{def:causally_valid_model}), a representation is \emph{causally valid} (\cref{def:causally_valid_model}) with respect to a statistical estimand 
if and only if the statistical estimand remains unchanged when plugging in the measurement variables 
correspond to the causal variables.
More concretely, we illustrate the implications of nonlinear invertible reparameterizations 
of causal variables in two commonly encountered scenarios: 
when representations serve as proxies of (i) the treatment or outcome variables, 
and (ii) the confounders or instrumental variables.

\begin{wrapfigure}{r}{0.4\textwidth}
    \centering
    \begin{tikzpicture}
    \tikzstyle{var}=[circle, draw, thick, minimum size=8.5mm, font=\small, inner sep=1]
    \tikzstyle{arrow}=[-latex, thick]
    \tikzstyle{doublearrow}=[latex-latex, thick]
    \tikzstyle{dashedarrow}=[-latex, thick, dashed]

    \node[var] (Z1) at (-1.5, 1.5) {$\Zb_1$};
    \node[var] (Z2) at (0, 1.5) {$\Zb_2$};
    \node[var] (Z3) at (1.5, 1.5) {$\Zb_3$};

    \node[var, fill=gray!30] (ZhA1) at (-1.5, 0) {$\widehat{\Zb}_{A_1}$};
    \node[var, fill=gray!30] (ZhA2) at (0, 0) {$\widehat{\Zb}_{A_2}$};

    \draw[arrow] (Z1) -- (Z2); 
    \draw[arrow] (Z2) -- (Z3);
    \draw[-latex, thick] (Z1) to [out=45,in=145] (Z3);
    \draw[arrow] (Z1) -- (ZhA1);
    \draw[arrow] (Z2) -- (ZhA2);
\end{tikzpicture}
    \caption{$\widehat{\Zb}_{A_i}$ measures $\Zb_i$ through a nonlinear bijection for both $i=1, 2$.}
    \label{fig:measure_treatment_effect}
\end{wrapfigure}

\subsection{Representations of Treatment and Outcome}
\label{app:model_treatement_and_outcome}
\looseness=-1 Assume in~\Cref{fig:measure_treatment_effect} that $\widehat{\Zb}_{A_1}, \widehat{\Zb}_{A_2}$ are element-wise nonlinear invertible reparametrization of $\Zb_1, \Zb_2$  respectively; i.e., 
$\forall i \in \{1, 2\}, \widehat{\Zb}_{A_i} = h_i(\Zb_i)$ for some diffeomorphism $h_i: \bbR \to \bbR$.
We aim to estimate the treatment effect of $\Zb_1 \to \Zb_2$ using the learned representations $\widehat{\Zb}_{A_1}$ and $\widehat{\Zb}_{A_2}$.

Assume the $\Zb_2$ is generated following~\cref{eq:struc_eq_cs}, i.e.,
\begin{equation*} 
    \Zb_2 \coloneqq a \cdot \Zb_1 + e
\end{equation*}
with $e\sim P_e$, $\bbE[e] = 0$ and $e\indep \Zb_1$. 
Given there is no unobserved confounding, the ground truth average treatment effect is written as
\begin{equation}
    \text{ATE}(\Zb_1 \to \Zb_2) = \dfrac{\partial \bbE[\Zb_2 ~|~ do(\Zb_1 = \zb_1)]}{\partial \zb_1} = \dfrac{\partial \bbE[\Zb_2 ~|~ \Zb_1 = \zb_1]}{\partial \zb_1} = \dfrac{\partial \bbE[a \zb_1 + e]}{\partial \zb_1}
        = a.
\end{equation}

We assume measurement function $h_i$ for all $i \in\{1, 2\}$ to be linear, i.e., 
\begin{equation}
    \begin{aligned}
        \widehat{\Zb}_{A_1} = \alpha_1 \cdot \Zb_1, \qquad
        \widehat{\Zb}_{A_2} = \alpha_2 \cdot \Zb_2, \quad \text{and} \quad 
        \alpha_1, \alpha_2 \neq 0.
    \end{aligned}
\end{equation}

The ATE estimand from the learned representations yields:
\begin{equation}
\label{eq:ate_from_z_hat}
\begin{aligned}
    \text{ATE}(\widehat{\Zb}_{A_1} \to \widehat{\Zb}_{A_2}) 
    &= \dfrac{\partial \bbE[\widehat{\Zb}_{A_2} ~|~ \widehat{\Zb}_{A_1} = \hat{\zb}_{A_1}]}{\partial \hat{\zb}_{A_1}}\\
    &= \dfrac{\partial \bbE[\alpha_2 \Zb_2~|~ \alpha_1 \Zb_1 = \alpha_1 \zb_1]}{\partial \alpha_1 \zb_1 } \\
    &= \dfrac{\alpha_2 \partial  \bbE[\Zb_2 ~|~  \Zb_1 = \zb_1]}{ \alpha_1 \partial \zb_1}
    = \frac{\alpha_2}{\alpha_1} a.
\end{aligned}
\end{equation}

As shown by~\cref{eq:ate_from_z_hat}, the ATE estimand using the learned representation $\widehat{\Zb}_{A_1}$ and $\widehat{\Zb}_{A_2}$ can be arbitrarily scaled by the factor of $\nicefrac{\alpha_2}{\alpha_1}$. 
Thus, measurements that bijectively transform the causal latent variables cannot naively support estimating the treatment effect, violating causal validity~\pcref{def:causally_valid_model}; it requires direct supervision or observation on \emph{both} treatment and outcome variables, as also pointed out by~\citep[Sec. 4]{von2024nonparametric}.

On the other hand, information-theoretic measures for quantifying causal influence remain invariant under bijective transformation, such as the mutual information $I_{\text{int}}(\Zb_1; \Zb_2) = I_{\text{int}}(\widehat{\Zb}_{A_1}; \widehat{\Zb}_{A_2})$, as shown by \citet{janzing2013quantifying}. 

\subsection{Representations of Confounders or Instruments}
\label{app:model_confounding_instrument}

\textbf{Measuring confounding.} We first show an example where an observed treatment $\Tb$ 
and an observed outcome $\Yb$ is confounded by a third variable $\Wb$ which is measured by $\widehat{\Wb} = h(\Wb)$ through a deterministic invertible function $h$.

Formally, the measurement model is defined as  $\Mcal^{\text{conf}} = \langle \Zb, \widehat\Zb, \{h\}\rangle$ with $\Zb = \{\Tb,\Yb, \Wb\}$ and $\widehat\Zb = \{\widehat{\Wb}\}$, where $\Tb, \Yb$ are \emph{directly measured}~\pcref{def:measurement_model}.
The corresponding DAG is given in~\cref{fig:supervised_TY}. 
We show in the following that this measurement model is indeed causally 
valid~\pcref{def:causally_valid_model} with respect to the statistical estimand for the 
Average Treatment Effect (ATE) of $\Tb$ on $\Yb$. 

\begin{figure}[t]
    \centering
    \begin{minipage}[t]{0.48\textwidth}
        \centering
        \begin{tikzpicture}
    \tikzstyle{var}=[circle, draw, thick, minimum size=8.5mm, font=\small, inner sep=1]
    \tikzstyle{arrow}=[-latex, thick]
    \tikzstyle{doublearrow}=[latex-latex, thick]
    \tikzstyle{dashedarrow}=[-latex, thick, dashed]

    \node[var] (W) at (-1.5, 1.5) {$\Wb$};

    \node[var, fill=gray!30] (What) at (-1.5, 0) {$\widehat{\Wb}$};
    \node[var, fill=gray!30] (T) at (0, 1.5) {$\Tb$};
    \node[var, fill=gray!30] (Y) at (1.5, 1.5) {$\Yb$};

    \draw[arrow] (W) -- (What); 
    \draw[arrow] (W) -- (T);
    \draw[-latex, thick] (W) to [out=45,in=145] (Y);
    \draw[arrow] (T) -- (Y);
\end{tikzpicture}
        \caption{\emph{ATE remains invariant under bijective transformation of confounders.} The treatment $\Tb$ and outcome $\Yb$ are directly measured (i.e., observed) whereas confounder $\Wb$ is measured by $\widehat{\Wb}$ through a nonlinear bijection.}
        \label{fig:supervised_TY}
    \end{minipage}
    \hfill
    \begin{minipage}[t]{0.48\textwidth}
        \centering
        \begin{tikzpicture}
    \tikzstyle{var}=[circle, draw, thick, minimum size=8.5mm, font=\small, inner sep=1]
    \tikzstyle{arrow}=[-latex, thick]
    \tikzstyle{doublearrow}=[latex-latex, thick]
    \tikzstyle{dashedarrow}=[-latex, thick, dashed]

    \node[var] (I) at (-1.5, 1.5) {$\Ib$};
    \node[var, dashed] (U) at (0.75, 3) {$\Ub$};

    \node[var, fill=gray!30] (Ihat) at (-1.5, 0) {$\widehat{\Ib}$};
    \node[var, fill=gray!30] (T) at (0, 1.5) {$\Tb$};
    \node[var, fill=gray!30] (Y) at (1.5, 1.5) {$\Yb$};

    \draw[arrow] (I) -- (Ihat); 
    \draw[arrow] (I) -- (T);
    \draw[arrow] (U) -- (T);
    \draw[arrow] (U) -- (Y);
    \draw[arrow] (T) -- (Y);
\end{tikzpicture}
        \caption{\emph{ATE remains invariant under bijective transformation of instruments.} $\widehat{\Ib}$ measures the instrument variable $\Ib$ through a nonlinear bijection. The treatment $\Tb$ and outcome $\Yb$ are directly measured (i.e., observed), and $\Ub$ denotes unobserved confounding.}
        \label{fig:model_instrument}
    \end{minipage}
\end{figure}


Under the standard assumptions for backdoor adjustment, it follows that 
\begin{equation}
\label{eq:distribution_invariant}
    \begin{aligned}
    \bbE(\Yb | do(\Tb=t)) &= \bbE_{\wb} \left[\bbE(\Yb ~|~\Wb, \Tb=t) \right]\\
    &= \int \bbE(\Yb ~|~\Wb, \Tb=t) P( \Wb) d\wb\\ 
    &= \int \bbE(\Yb ~|~ h^{-1}(\widehat{\Wb}), \Tb=t) P( h^{-1}(\widehat{\Wb})) \frac{d h^{-1}(\hat{\wb})}{d\hat{\wb}} d\hat{\wb} \\
    &= \int \bbE(\Yb ~|~ \widehat{\Wb}, \Tb=t) P(\widehat{\Wb}) d\hat{\wb}\\
    &= \bbE_{\hat\wb} \left[\bbE(\Yb ~|~\widehat\Wb, \Tb=t) \right],
\end{aligned}
\end{equation}

where we used the change of variable formula and the fact that $ \bbE(\Yb ~|~ \widehat{\Wb}, \Tb=t) =  \bbE(\Yb | h^{-1}(\widehat{\Wb}), \Tb=t)$. This is because $h^{-1}(\widehat{\Wb})$ is a sufficient statistic for $\Wb$~\citep[Ch. 6.2]{casella2024statistical} following $h$ is invertible.

Under the same assumptions, the ATE for \emph{binary} treatment 
can then be identified by the following statistical estimand
\begin{equation}
    \begin{aligned}
        \text{ATE}(\Tb \to \Yb) 
        &= \bbE [\Yb | do(\Tb=1)] - \bbE [\Yb ~|~ do(\Tb=0)] \\
        &= \bbE_{\wb} \left[\bbE(\Yb ~|~ \Wb, \Tb=1) - \bbE(\Yb ~|~ \Wb, \Tb=0)\right].
    \end{aligned}
\end{equation}
Following~\cref{eq:distribution_invariant}, we have
\begin{equation*}
      \text{ATE}(\Tb \to \Yb) =\bbE_{\hat\wb} \left[\bbE(\Yb ~|~ \widehat\Wb, \Tb=1) - \bbE(\Yb ~|~ \widehat\Wb, \Tb=0)\right],
\end{equation*}
indicating that the identified statistical estimand $\text{ATE}(\Tb \to \Yb)$ remains invariant for the measurement $\widehat{\Wb}$. Similarly, ATE also remains invariant when the treatment is continuous:
\begin{equation}
    \begin{aligned}
        \text{ATE}(\Tb \to \Yb) &= \dfrac{\partial \bbE [\Yb ~|~ do(\Tb=t)]}{dt} = \dfrac{\partial \bbE_{\wb}\bbE [\Yb ~|~ \Wb, \Tb=t]}{dt} = \dfrac{\partial \bbE_{\hat\wb}\bbE [\Yb ~|~ \widehat{\Wb}, \Tb=t]}{dt},
    \end{aligned}
\end{equation}
where the last equality holds because of~\cref{eq:distribution_invariant}. 
Therefore, we have shown that invertible reparameterizations of the confounders can be a drop-in replacement of the true confounding variables in the statistical estimand for ATE, 
for both discrete and continuous treatments, and thus this measurement model $\Mcal^{\text{conf}}$ 
is indeed causally valid for ATE.

\looseness=-1 \textbf{Measuring instrumental variables.} We now give a second example of 
ATE estimation under an instrumental variable setup. 
We assume that the instrument $\Ib$ is measured by $\widehat{\Ib} = h(\Ib)$ through a bijective transformation $h$.
We show that under certain assumptions, the statistical estimand does not change 
when using $\widehat\Ib$ as a drop-in replacement of the true instrument $\Ib$. 
We focus on the case where the instrument $\Ib$, the treatment $\Tb$, and the response $\Yb$ are 
all univariate continuous variables; further discussion on multivariate and discrete valued variables is beyond the scope of this paper. 
Formally, the measurement model is defined as $\Mcal^{\text{IV}} = \langle \Zb,\widehat{\Zb}, \{h\}\rangle$ with causal variables $\Zb = \{\Ib, \Tb, \Yb\}$ and measurement variables $\widehat\Zb = \{\widehat{\Ib}\}$. The treatment $\Tb$ and outcome $\Yb$ are \emph{directly measured}~\pcref{def:measurement_model}
and confounded by unknown hidden confounders $\Ub$.  
\Cref{fig:model_instrument} shows the DAG of this measurement model.

We show in the following that the instrument $\Ib$ remains a valid instrumental variable under a bijective transformation, i.e., the measurement variable $\widehat{\Ib} = h(\Ib)$ also satisfies the standard IV assumptions, which are listed as follows: 
\begin{itemize}
    \item Relevancy: $\Ib \dep\Tb \given \Ub$
    \item Unconfoundedness: $\Ib \indep\Ub$
    \item Exclusion restriction criteria: $\Ib \indep \Yb \given \Tb, \Ub$
\end{itemize}
Following standard probability theory~\citep[see e.g.,][]{billingsley2008probability}, if $h$ is a 
bijective function, all three conditions still hold when replacing $\Ib$ by $h(\Ib)$. 
This means that if the ATE is identified by a statistical estimand when using $\Ib$ as an instrument, 
it is also identified when using $\widehat{\Ib}$ as an instrument. 
In other words, the measurement model $\Mcal^{\text{IV}}$ is causally valid with respect to an identified statistical estimand because $\widehat{\Ib}$ can serve as a drop-in replacement for $\Ib$~\pcref{def:causally_valid_model}. 

As a specific example, consider the case where the causal mechanism of $\Yb$ is partially linear 
(a commonly studied setup in 
the semi-parametric inference literature, see e.g., \citet{chernozhukov2018double}), i.e.,
$\Yb = \Tb\beta + g(\Ub, \varepsilon)$, for some measurable function $g$ where $\bbE[g(\Ub, \epsilon)]=0$ and where
$\varepsilon\sim P_\varepsilon$ is an independent noise variable, 
the ATE 
\begin{equation*}
    \text{ATE}(\Tb \to \Yb) 
    = \dfrac{\partial \EE[\Yb ~|~ do (\Tb=\tb)]}{\partial \tb} 
    = \dfrac{\partial \EE[\tb\beta + g(\Ub, \varepsilon)]}{\partial \tb}
    = \beta
\end{equation*}
can be identified by the statistical estimand 
\begin{equation}
\label{eq:statistical_estimand_ATE}
    \text{ATE}(\Tb \to \Yb) = \displaystyle \frac{\cov(\Yb, \Ib)}{\cov(\Tb,\Ib)}.
\end{equation}

We show in the following that the statistical estimand $\text{ATE}(\Tb \to \Yb)$ in~\cref{eq:statistical_estimand_ATE} remains invariant 
when using $\widehat{\Ib}$ as a drop-in replacement for $\Ib$. 
Plugging in $\widehat{\Ib}$ in the numerator 
\begin{equation*}
\cov(\Yb,\widehat\Ib) = \bbE[\Yb\widehat\Ib] - \bbE[\Yb]\bbE[\widehat\Ib]
= \beta \left(\bbE[\Tb \widehat\Ib] - \bbE[\Tb]\bbE[\widehat\Ib]\right) = \beta \cov(\Tb,\widehat\Ib),
\end{equation*}
we have $\displaystyle \frac{\cov(\Yb,\widehat\Ib)}{\cov(\Tb,\widehat\Ib)} = \beta = \frac{\cov(\Yb, \Ib)}{\cov(\Tb,\Ib)}$. Therefore, we have shown another example where the measurement $\widehat{\Ib}$ can serve as a drop-in replacement for the latent instrumental variable $\Ib$ for downstream causal inference tasks.

\end{document}